\documentclass[final]{neus2025}


\title[TM PINNs for ODEs]{Taylor-Model Physics-Informed Neural Networks (PINNs) for Ordinary Differential Equations}

\usepackage{times}
\usepackage{todonotes}
\usepackage{comment}
\usepackage{amsmath,amssymb, amsfonts}
\usepackage{upgreek}
\usepackage[skip=18pt]{caption}  
\usepackage{breqn, paralist}
\usepackage[title]{appendix}
\usepackage{algorithm, algorithmic}




\coltauthor{%
 \Name{Chandra Kanth Nagesh}\and\Name{Sriram Sankaranarayanan} \Email{first.lastname@colorado.edu}\\
 \addr University of Colorado Boulder
 \AND
 \Name{Ramneet Kaur}, \Name{Tuhin Sahai}\and\Name{Susmit Jha} \Email{first.lastname@sri.com}\\
 \addr SRI International
}

\usepackage[normalem]{ulem} 

\newcommand\vx{\vec{x}}

\newcommand\vt{\vec{t}}
\newcommand\vb{\vec{b}}
\newcommand\vtheta{\vec{\theta}}
\newcommand\reals{\mathbb{R}}
\newcommand\Lie{\mathcal{L}}
\newcommand\mrx{\mathrm{x}}
\newcommand\mry{\mathrm{y}}

\begin{document}

\maketitle

\vspace*{-0.5em}
\begin{abstract}
We study the problem of learning neural network models for Ordinary Differential Equations (ODEs) with parametric uncertainties. Such neural network models capture the solution to the ODE over a given set of parameters, initial conditions, and range of times. Physics-Informed Neural Networks (PINNs) have emerged as a promising approach for learning such models that combine data-driven deep learning with symbolic physics models in a principled manner.  However, the accuracy of PINNs degrade when they are used to solve an entire family of initial value problems characterized by varying parameters and initial conditions. 

In this paper, we combine symbolic differentiation and Taylor series methods to propose a class of higher-order models for capturing the solutions to ODEs. These models combine neural networks and symbolic terms: they use higher order Lie derivatives and a Taylor series expansion obtained symbolically, with the remainder term modeled as a neural network. The key insight is that the remainder term can itself be modeled as a solution to a first-order ODE. We show how the use of these higher order PINNs can improve accuracy using interesting, but challenging ODE benchmarks. We also show that the resulting model can be quite useful for situations such as controlling uncertain physical systems modeled as ODEs.
\end{abstract}

\begin{keywords}
  Physics-Informed Neural Networks, Initial Value Problems, Ordinary Differential Equations, Taylor Models
\end{keywords}

\section{Introduction}

Finding closed-form analytic solutions to systems of Ordinary Differential Equations (ODEs) is challenging for all but the simplest class of systems. The problem is even more challenging for ODEs with parameters that can take on a set of possible values, unknown initial conditions and external inputs. Physics-Informed Neural Networks (PINNs) have emerged as a solution to finding approximate closed-forms modeled as neural networks~\cite{raissi2018multistepneuralnetworksdatadriven}. They have been studied for solving PDEs, especially non-linear PDEs that are hard to solve numerically. The problem of PINNs for ODEs have received considerably less attention since numerical solvers are quite successful in finding solutions for many common ODEs~\cite{Hairer+Others/2000/Solving}. However, the use of numerical solvers is distinctly problematic for applications that involve solving optimization problems involving ODEs with \textit{unknown parameters and inputs}. These arise in machine learning, where one wishes to learn parameters from data or optimal control, wherein we seek control inputs that optimize a function across the trajectories of the system. For such applications, it is desirable to have a ``surrogate model'' that can capture the solution of the ODEs with high enough accuracy over a range of parameters, initial conditions and times. If each evaluation of the surrogate can be performed more efficiently than using a numerical solver, the overall optimization can be faster. 

We investigate the use of PINNs to build surrogate models that capture the solution to an initial value problem (IVP) given by a system of ODEs $\dot{\vx} = f(\vx, \vtheta, t)$ for parameters $\vtheta \in \Theta$, initial conditions $\vx(0) \in \Omega$ and $t \in [0, T]$. In other words, our surrogate model $\varphi(\vx_0, \vtheta, t)$ maps the inputs to a solution $\vx(t; \vx_0, \vtheta)$ of the ODE at time $t$.  The standard PINN approach of~\cite{raissi_physics-informed_2019} a) uses a neural network to represent $\varphi$ and b) a combination of two loss functions given by the initial condition loss $\| \varphi(\vx_0, \vtheta, 0) - \vx_0 \|$ and the gradient loss $\| \dot{\varphi} - f(\varphi, \vtheta, t)\|$ averaged at various randomly chosen ``collocation points''. 

In this paper, we first point out the inadequacy of the PINN approach to this problem by demonstrating its failure to approximate the solution when the sets $\Theta, \Omega$ and $[0, T]$ are large. We show that higher-order loss functions fail to address the issue. Therefore, we resort to a symbolic approach that uses successive Lie derivatives to compute the terms of a Taylor series expansion of the solution. We show that the remainder when carefully modeled can be written down as the solution to a derived ODE. Solving this derived ODE for the higher-order remainder using the ``classic'' PINN approach yields a solution that combines the best aspects of symbolic differentiation with neural network learning. We show that our error grows as $O(t^{m+1} e^{Kt})$ for an approach that uses derivatives up to order $m \geq 1$, whereas, for PINNs, the error grows as $O(t e^{Kt})$. As a result, our approach provides high levels of accuracy at the initial times. We compare our approach, which we call ``Taylor-Model PINNs'' with PINNs, and the related approach of ``Higher-Order PINNs'' that extends the original PINN loss function with higher order derivative-based loss functions. We show that Taylor-Model PINNs provide higher accuracy. While our approach increases the complexity of the training process, the use of efficient symbolic differentiation tools offsets this process. 

\subsection{Related Work}

Machine learning approaches have found applications in diverse domains ranging from celestial object classification \cite{angeloudi_multimodal_nodate}, climate forecasting \cite{iglesias-suarez_causally-informed_2024}, and tumor identification \cite{li_medical_2023}. In such scenarios, it has been observed that using background knowledge in the form of mathematical models in the learning process can considerably speed up convergence and improve solution quality. 


Physics-Informed Neural Networks (PINNs) \cite{raissi_physics-informed_2019} represent a seminal contribution in this space. They have been effective in solving systems which involve partial differential equations (PDEs), where data is sparsely available. By utilizing differentiable loss functions, a neural network is trained on the PDE residual and boundary condition loss to learn a solution map to the PDE. These physics-inspired loss terms act as a regularizer against learning solution maps that do not involve the underlying dynamics of the system, thereby conforming well to how the system evolves over time. This promising approach has led to widespread use of the methodology in various applications \cite{shukla_physics-informed_2020}, \cite{wang_deep_2021}, \cite{yin_non-invasive_2021}. 

Despite their contributions, PINN methodologies can often  fail to learn physical dynamics in many cases \cite{krishnapriyan_characterizing_2021}, \cite{steger2022how}. To tackle such issues, there have been efforts into  PINNs with additional loss functions \cite{son_enhanced_2023}, \cite{wang_is_nodate}. However, their efficacy diminishes under two conditions (a) conflicting gradient updates between the two loss functions, leading to suboptimal gradient descent \cite{hwang_dual_2025} and (b) when applied to parametric PDE families, particularly those requiring simultaneous resolution across a range of initial conditions and parameters \cite{xiang_physics-informed_2025}. 



The Taylor series expansion represents a fundamental concept in mathematical analysis, providing a powerful framework for approximating  functions through polynomials or power-series~\cite{Apostol/1967/Calculus}. 
Our approach of using Taylor series expansions has been heavily influenced by the work of Makino and Berz, who have applied so-called ``Taylor-model calculus'' to represent a set of complex and unknown functions by a finite Taylor series expansion with an interval remainder~\cite{Berz+Makino/1998/Verified,Makino+Berz/2009/Taylor}. This has led to popular approaches in the area of formal methods for proving properties of Cyber-Physical Systems~\cite{Chen+Sankaranarayanan/2022/Reachability,Althoff+Frehse+Girard/2021/Set,Althoff/2015/CORA,Kong+/2015/dReach}. Here, we adapt Taylor models to represent solutions but let the remainder be represented by a neural network rather than an interval. 
Parts of this problem have been investigated before, where trained neural networks are approximated using Taylor polynomials to enable integration of physical constraints into dynamical systems \cite{zhu_jing_leve_ferrari_2022}, \cite{balduzzi_mcwilliams_butler-yeoman_2016}. Furthermore, researchers have looked into the idea of Taylor layers for Transformer architectures \cite{zwerschke_weyrauch_götz_debus_2024}, which are higher order polynomial approximation replacements of standard linear or attention layers. The contributions of this paper can be summarized as follows:
\begin{compactenum}
    \item We show how a symbolic Taylor series expansion of an \emph{a priori} fixed order and the remainder term modeled by a neural network can be used to capture solutions of ODEs accurately.
    \item We present an evaluation of our approach based on how PINN errors grows over time.
    \item We develop a novel neural network training method and show that this new approach converges to tight solution maps compared to traditional PINNs, on seven ODE systems with varying dimensionality and parameter space. 
    \item The trends predicted by our analysis are empirically demonstrated on a set of examples. 
\end{compactenum}

\section{Preliminaries}

We will present some preliminary facts about ODEs, their solutions, and Taylor series expansions. 

\begin{definition}[Ordinary Differential Equations ]
A system of (coupled) Ordinary Differential Equations (ODE) over state variables $\vec{x} = (x_1, \ldots, x_n)$ and parameters $ \vec{\theta} = (\theta_1, \ldots, \theta_m)$ is of the form  $ \frac{d \vec{x}}{dt}  =  f(\vec{x}, \vec{\theta}, t)$, 
wherein $t$ represents the time variable, and $f: \mathbb{R}^n \times \mathbb{R}^m \times \mathbb{R} \rightarrow \mathbb{R}^n$ is a \emph{vector-field} that maps states, parameters and time to the value of the derivative.
\end{definition}
For an initial condition $\vx(0) = \vx_0$ and fixed values of the parameters $\vtheta$, 
the solution of the ODE is a differentiable map $\psi(\vx_0, \vec{\theta}, t)$ such that for all time $t$,  
$ \frac{d \psi(\vx_0, \vtheta, t)}{dt}  = f(\psi(x_0, \vtheta, t), \vtheta, t)$.
We assume that the function $f$ defining the RHS of the ODE is \emph{Lipschitz} continuous, thus ensuring the existence and uniqueness of solutions.

\begin{example}[Duffing Oscillator]\label{Ex:duffing-oscillator-model}
Consider an ODE that models the dynamics of a Duffing oscillator with $\vx = (x, y)$, $\vtheta=(\delta)$ and dynamics given by $ \frac{dx}{dt} \ =\ y,\ \frac{dy}{dt} \ = \ x - x^3 - \delta y $. 
\end{example}

Given an ODE model, we seek to represent the solution $\varphi$ as a function of all initial conditions $\vx_0$, parameters $\theta$ and time $t$. However, this sort of \emph{analytical solution} is only available for a restricted class of ODEs. In practice, we have to settle for an \emph{approximate  solution} available either through a numerical ODE solver (for instance, using the Runge-Kutta algorithm) or an approximate analytic solution $\varphi(\vx_0, \vtheta, t)$ that provides a solution close to the real solution for a range of initial conditions $\vx_0 \in \Omega$, $\vtheta \in \Theta$ and $t \in [0, T]$ for given sets $\Omega, \Theta$ and time horizon $T > 0$.  The approximate solution map has two potential advantages: (a) it can be computationally less expensive than numerical solvers; and (b) it can be used to estimate derivatives such as $\frac{d \varphi(t)}{d \vtheta}$ and $\frac{d \varphi(t)}{d \vx_0}$ efficiently. Computing such derivatives is useful for learning parameters from data, and  is hard to do using numerical solvers. The main problem statement for this paper is as follows:

\begin{definition}[Learning Solution Map]
Given the description of an ODE $\frac{d\vx}{dt} = f(\vx, \vtheta, t)$, a set of initial conditions $\vx_0 \in \Omega$, a set of parameters $\vtheta \in \Theta$ and a time horizon $t \in [0, T]$ for $T > 0$, we wish to learn a solution map $\varphi: \reals^n \times \reals^m \times \reals \rightarrow \reals^n$ such that $\varphi(\vx_0, \theta, t)$ is as close as possible to the precise solution map $\psi$.
\end{definition}

The physics-informed approach learns $\varphi$ by fixing a finite sample set of ``collocation points'' $S \subseteq \Omega \times \Theta \times [0, T]$, wherein $|S| = N$, and minimizing two different loss functions simultaneously:
\[ \begin{array}{rl}
L_{i} & = \frac{1}{N} \sum_{(\vx_0, \vtheta, t) \in S} \| \varphi(\vx_0, \vtheta, 0) - \vx_0 \|\,, \\ 
L_{g} & = \frac{1}{N} \sum_{(\vx_0, \vtheta, t) \in S} \| 
\frac{\partial}{\partial t} \varphi(\vx_0, \varphi, t)  - f(\varphi(\vx_0, \vtheta, t) , \vtheta, t)  
\|\,\\ 
\end{array}\]
In practice, the approaches minimizes a linear combination $L= \omega_i L_i + \omega_g L_g$ for  user-specified constants $\omega_i, \omega_g$. The process of learning $\varphi$ proceeds by fixing a neural network architecture with unknown weights $W$ and using stochastic gradient descent (since $N$ is typically very large) technique to find a local minimizer $W^*$ for the overall loss $L$.
\begin{figure}[ht!]
    \centering
    \includegraphics[trim=0 15 10 10,clip,width=\linewidth]{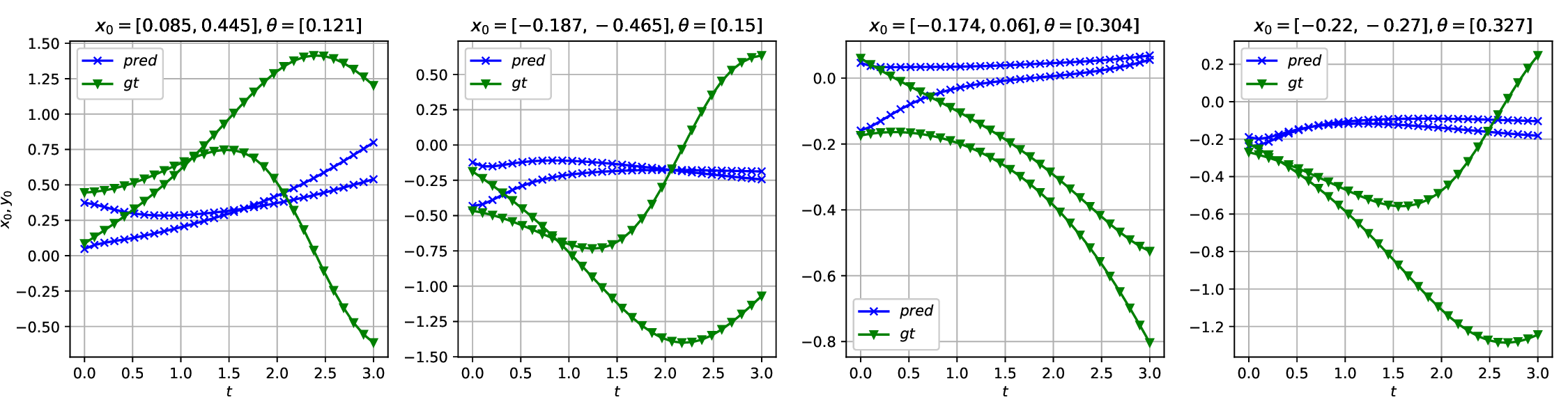}
    \caption{Numerical simulations (taken as ground-truth) shown in green compared against the PINN prediction shown in blue for the Duffing Oscillator system provided in Ex.~\ref{Ex:duffing-oscillator-model}. The initial conditions and parameters are randomly sampled from $\Omega \in [-0.5, 0.5]$ and $\theta \in [0.1, 0.5]$ respectively.}
    \label{fig:do-pinn}
\end{figure}
\begin{example}
Consider the Duffing oscillator model from Ex.~\ref{Ex:duffing-oscillator-model}.
Fig.~\ref{fig:do-pinn} compares the ``ground truth'' trajectories obtained through a numerical simulation against the predictions of the PINN model given by a neural network with 1 layers and 64 neurons per layer. Note that the trajectories diverge rapidly from predictions. Also, note that since the loss $L_i$ is not zero, it causes discrepancies even in the initial predictions.
\end{example}

Let us assume that we are able to learn a neural network model $\varphi_N(\vx_0, \vtheta, t)$ for $\vx_0 \in \Omega$, $\vtheta \in \Theta $ and $t \in [0, T]$ such that $\varphi_N$ is a differentiable function of $t$, and the following inequalities hold:
\begin{align*}
    \max_{\vx_0 \in \Omega, \vtheta \in \Theta} \| \varphi_N(\vx_0, \vtheta, 0) - \vx_0 \| \leq L_{i, \max}\; \text{and}\;
    \max_{\vx_0 \in \Omega, \vtheta \in \Theta, t \in [0, T]} \| \dot{\varphi}_N(\vx_0, \vtheta, t) - f(\varphi_N, \vtheta, t) \| \leq L_{g, \max}
\end{align*}
for some constants $L_{i,\max}, L_{g, \max} > 0$.
Let $\psi(\vx_0, \vtheta, t)$ represent the analytical solution of the ODE.
\begin{theorem}\label{thm:useful-theorem-1}
There exists a constant $K > 0$ such that, for all $\vx_0 \in \Omega, \vtheta \in \Theta$ and $t \in [0, T]$, we have  $\| \varphi(\vx_0, \vtheta, t) - \psi(\vx_0, \vtheta, t) \| \leq (L_{i, \max} + L_{g, \max} t) e^{Kt}$
\end{theorem}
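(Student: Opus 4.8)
The plan is to treat $\| \varphi - \psi \|$ as an error quantity, derive an integral inequality for it, and close the argument with Gr\"onwall's inequality. The constant $K$ will be the Lipschitz constant of the vector field $f$ in its state argument, which exists by the standing assumption that $f$ is Lipschitz continuous.

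First I would fix $\vx_0 \in \Omega$ and $\vtheta \in \Theta$ and define the error curve $e(t) = \varphi(\vx_0, \vtheta, t) - \psi(\vx_0, \vtheta, t)$. Since $\psi$ solves the ODE exactly, $\dot\psi = f(\psi, \vtheta, t)$, so differentiating and inserting $\pm f(\varphi, \vtheta, t)$ gives
\[ \dot e = \bigl(\dot\varphi - f(\varphi,\vtheta,t)\bigr) + \bigl(f(\varphi,\vtheta,t) - f(\psi,\vtheta,t)\bigr). \]
The first parenthesized term is exactly the gradient residual, bounded in norm by $L_{g,\max}$; the second is bounded by $K\,\|e\|$ using the Lipschitz property of $f$. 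Hence $\|\dot e(t)\| \le L_{g,\max} + K\,\|e(t)\|$ pointwise in $t$.

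Next I would pass to integral form, which avoids differentiating the norm $\|e(t)\|$ (a quantity that need not be everywhere differentiable). Writing $e(t) = e(0) + \int_0^t \dot e(s)\,ds$ and combining the triangle inequality, the initial-condition bound $\|e(0)\| \le L_{i,\max}$, and the pointwise estimate above yields
\[ \|e(t)\| \;\le\; L_{i,\max} + L_{g,\max}\,t + K\int_0^t \|e(s)\|\,ds. \]
The prefactor $\alpha(t) = L_{i,\max} + L_{g,\max}\,t$ is non-negative and non-decreasing, so the integral form of Gr\"onwall's inequality applies directly and gives $\|e(t)\| \le \alpha(t)\,e^{Kt} = (L_{i,\max} + L_{g,\max}\,t)\,e^{Kt}$. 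Since the bound holds for every fixed $(\vx_0,\vtheta)$ with the same $K$, taking the supremum over $\Omega \times \Theta$ recovers the stated uniform estimate.

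The routine parts --- the triangle inequality and the integral manipulation --- are standard; the only place that needs care is the Lipschitz step. The main obstacle is ensuring that a \emph{single} constant $K$ is valid along both the true trajectory $\psi$ and the network trajectory $\varphi$ over the whole horizon $[0,T]$. Under the paper's global Lipschitz assumption this is immediate. If instead $f$ were only locally Lipschitz, I would first confine both curves to a compact tube around $\Omega$ --- using the finite horizon $T$ together with the residual bounds to control how far either curve can travel --- and take $K$ as the Lipschitz constant of $f$ on that tube.
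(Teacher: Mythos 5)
Your proposal is correct and follows essentially the same route as the paper's proof: the same insertion of $\pm f(\varphi,\vtheta,t)$, the same integral inequality $\|e(t)\| \le L_{i,\max} + L_{g,\max}t + K\int_0^t \|e(s)\|\,ds$, and the same closing application of Gr\"onwall's inequality, with your final remark on obtaining $K$ from a compact set matching the paper's choice of the Lipschitz constant over $\varphi(\Omega,\Theta,[0,T]) \cup \psi(\Omega,\Theta,[0,T])$. The only cosmetic difference is that you first bound $\|\dot e\|$ pointwise and then integrate, whereas the paper works directly with the integral representations of $\varphi$ and $\psi$.
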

\begin{proof}
Note that $ \psi(\vx_0, \vtheta, t) = \vx_0 + \int_{0}^t f(\psi(\vx_0, \vtheta, s), \vtheta, s) ds$. Likewise, assuming differentiability of $\varphi_N$, we have 
$ \varphi_N(\vx_0, \vtheta, t)= \varphi_N(\vx_0, \vtheta, 0) + \int_{0}^t \dot{\varphi}_N(\vx_0, \vtheta, s) ds$.  For simplicity, we write $\varphi_N(t) := \varphi_N(\vx_0, \vtheta, t)$ and $\psi(t):= \psi(\vx_0, \vtheta, t)$. We have,
\begin{align*}
\| \varphi_N(t) - \psi(t) \| & \leq \| \varphi_N(0) - \vx_0 \| + \left\| \int_{0}^t  \dot\varphi_N(s) - f(\psi(s), \vtheta, s)  ds \right\|  \\
& \leq  K_{i, \max} + \int_{0}^t \| \dot\varphi_N(\vx_0, \vtheta, s) - f(\psi, \vtheta, s) \| ds \\ 
& \leq K_{i,\max} + \int_{0}^t \| \dot\varphi_N(s) - f(\varphi_N(s), \vtheta, s) \| ds  + 
\int_{0}^t \| f(\varphi_N, \vtheta, s) - f(\psi, \vtheta, s) \| ds \\
& \leq K_{i, \max} + K_{g, \max} t + L_f \int_{0}^t \| \varphi_N(s) - \psi(s) \|  ds
\end{align*}
wherein $L_f$ is the Lipschitz constant of $f(\vx, \vtheta, t)$ over, $\vx \in X$ obtained as $\varphi(\Omega, \Theta, [0, T]) \cup \psi(\Omega, \Theta, [0, T])$. This set will be compact if $\Omega, \Theta$ are compact and $T$ is finite. Applying Gr\"onwall's inequality~\cite{Bellman/1943/Stability}, we conclude, 
$\| \varphi_N - \psi \| \leq (K_{i, \max} + K_{g, \max}t) e^{L_f t}$. 
\end{proof}

\section{Higher-Order PINNs}

In this section, we will tackle the problem of using symbolic differentiation computations based on Taylor series methods. Given ODE $\dot{\vx} = f(\vx, \vtheta, t)$, recall that the Lie derivative of a function $g(\vx, t)$ is given by 
$ \Lie_f (g) = \left(\nabla_{\vx} g \right) \cdot f + \frac{\partial}{\partial t } g $. 
We define the successive Lie derivatives: $\Lie^{(0)}(\vx) = f_0(\vx) = \vx$, and  $\Lie^{(i+1)}(\vx) = \Lie(f_i(\vx, \vtheta, t)) := f_{i+1}(\vx, \vtheta, t) $.  

The main idea behind higher order PINNs is to consider loss functions beyond the first order loss function. For instance, the second and third order losses are defined as:
\[ \begin{array}{rl}
&L_2 = \frac{1}{N} \sum_{(\vx_0, \vtheta, t) \in S} \left\| \frac{d^2}{dt^2} \varphi(\vx_0, \vtheta, t)  - f_2( \varphi, \vtheta, t) \right\|\, ~\text{and}~ \\ 
&L_3 = \frac{1}{N} \sum_{(\vx_0, \vtheta, t) \in S} \left\| \frac{d^3}{dt^3} \varphi(\vx_0, \vtheta, t)  - f_3( \varphi, \vtheta, t) \right\|\,\\ 
\end{array} \]

The overall loss is obtained by combining the initial condition loss $L_i$, the PINN gradient loss $L_g$, and the higher order losses: 
$ L = \alpha_0 L_i + \alpha_1 L_g + \alpha_2 L_2 + \alpha_3 L_3 + \cdots  + \alpha_m L_m $. 

However, such a scheme has two main disadvantages: (a) it requires us to take higher order derivatives of a large and complex neural network model; and (b) it introduces multiple loss functions, all of which need to be minimized by selecting an appropriate linear combination of loss functions. We propose, instead, a simpler scheme based on Taylor series that has the advantage of (a) using a single loss function, (b) not requiring Hessians or higher-order gradients of neural networks and (c) tries to match the flow up to some order $m > 0$. 

\subsection{Higher-Order PINNs based on Taylor series}
We will assume through the rest of this paper that the RHS function $f$ is at least $m+2$ times differentiable for some $m > 0$, we have:
\[ \vx(t) = \psi(\vx_0, \vtheta, t) = \vx_0 + t f_1(\vx_0, \vtheta, 0 ) + \frac{t^2}{2!} f_2(\vx_0, \vtheta, 0) + \cdots + \frac{t^{m}}{m!} f_{m}(\vx_0, \vtheta, 0) + T_m(\vx_0, \vtheta, t) \, \]
wherein $T_m$ denotes the remainder term of order $m+1$. 

\begin{theorem}
The function $T_m$ satisfies the following properties:
\begin{enumerate}
\item $ T_m(\vx_0, \vtheta, t) = \frac{1}{m!} \int_{0}^t (t-s)^m f_{m+1}(\vx(s), \vtheta, s) ds $
\item $T_m(\vx_0, \vtheta, 0) =  \dot{T}_m(\vx_0, \vtheta, 0) = \cdots = T_m^{(m)}(\vx_0, \vtheta, 0) = 0$ \,
\end{enumerate}
\end{theorem}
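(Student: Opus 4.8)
The plan is to reduce both claims to the classical Taylor theorem by first establishing the key bridge identity that the $i$-th time derivative of the true solution, taken along the flow, equals the $i$-th Lie derivative $f_i$. Concretely, I would prove by induction on $i$ that $\frac{d^i}{dt^i}\psi(\vx_0,\vtheta,t) = f_i(\psi(\vx_0,\vtheta,t),\vtheta,t)$ for every $0 \le i \le m+1$. The base case $i=0$ is $\psi = f_0(\psi) = \psi$, and $i=1$ is exactly the defining property of a solution, $\dot\psi = f(\psi,\vtheta,t) = f_1(\psi,\vtheta,t)$. For the inductive step I differentiate $f_i(\psi(t),\vtheta,t)$ in $t$ by the chain rule, obtaining $(\nabla_{\vx} f_i)\cdot\dot\psi + \partial_t f_i = (\nabla_{\vx} f_i)\cdot f + \partial_t f_i$, which is precisely $\Lie_f(f_i) = f_{i+1}$ by the definition of the Lie derivative. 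The standing assumption that $f$ is $(m+2)$-times differentiable is what guarantees all of these derivatives through order $m+1$ exist and are continuous, as required below.

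With this identity in hand, the first property follows directly. Evaluating the derivatives at $t=0$ gives $f_i(\vx_0,\vtheta,0) = \psi^{(i)}(\vx_0,\vtheta,0)$, so the displayed expansion is exactly the order-$m$ Taylor expansion of $\psi$ about $t=0$ and $T_m$ is its remainder. I would then invoke the integral (Cauchy) form of Taylor's theorem, $T_m(t) = \frac{1}{m!}\int_0^t (t-s)^m \psi^{(m+1)}(s)\,ds$, and substitute $\psi^{(m+1)}(s) = f_{m+1}(\psi(s),\vtheta,s) = f_{m+1}(\vx(s),\vtheta,s)$ from the bridge identity (using $\vx(s)=\psi(s)$) to recover claim~1 verbatim.

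For the second property I would write $T_m = \psi - P_m$, where $P_m$ is the degree-$m$ Taylor polynomial. Since $P_m$ is built so that $P_m^{(k)}(0) = \psi^{(k)}(0)$ for each $0 \le k \le m$, we immediately get $T_m^{(k)}(0) = \psi^{(k)}(0) - P_m^{(k)}(0) = 0$ for all such $k$. Alternatively, one can differentiate the integral representation from the previous step using the Leibniz rule: the boundary contribution vanishes because $(t-s)^{j}$ at $s=t$ is zero for $j \ge 1$, and each differentiation merely lowers the exponent of $(t-s)$, leaving an integral over $[0,t]$ that collapses to $0$ at $t=0$; this covers derivatives through order $m$.

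I expect the only genuinely nontrivial step to be the bridge identity of the first paragraph; the remainder is a mechanical application of standard Taylor theory. The care required there is bookkeeping: correctly applying the chain rule for the composition along the trajectory, recognizing the result as $\Lie_f(f_i)$, and confirming that the differentiability hypothesis on $f$ is strong enough to justify both the $(m+1)$-st derivative and the integral-remainder form. Once $\psi^{(i)} = f_i(\psi,\vtheta,\cdot)$ is secured, both parts of the theorem follow at once.
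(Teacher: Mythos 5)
Your proposal is correct and takes essentially the same route as the paper: the paper proves part~1 by citing the integral (Cauchy) form of the Taylor remainder \cite[Theorem 7.6]{Apostol/1967/Calculus} and part~2 by Leibniz differentiation of that integral, exactly as in your argument. The only difference is one of detail --- you explicitly establish the bridge identity $\psi^{(i)}(t) = f_i(\psi(t), \vtheta, t)$ by induction (which the paper leaves implicit in its citation) and you add a harmless, arguably cleaner, derivative-matching alternative for part~2.
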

\begin{proof}
The proof of the first statement is available from~\cite[Theorem 7.6]{Apostol/1967/Calculus}. The second statement follows from repeatedly differentiating the RHS of the first equality using Leibnitz's rule for differentiation under the integral sign.
\end{proof}
Rather than use loss functions to enforce that $T_m^{(j)}(\vx_0, \vtheta, 0)= 0$ for $j \leq m$, we can write 
\[ T_m(\vx_0, \vtheta, t)  = \frac{t^{m+1}}{(m+1)!} R_m(\vx_0, \vtheta, t) \,, \text{wherein}\ R_m = (m+1) \int_{0}^t \frac{1}{t} \left( 1 - \frac{s}{t} \right)^m f_{m+1}(\vx(s), \vtheta, s) ds  \,\]
$R_m$ can be re-written using the change of variables $\alpha = \frac{s}{t}$ as 
\begin{equation}\label{eq:rm-quadrature}
R_{m} = (m+1) \int_{0}^1 (1 - \alpha)^m f_{m+1}(\vx(\alpha t), \vtheta, \alpha t) d \alpha \,
\end{equation}

The goal is to use a neural network model for $R_m$ while inferring its parameters through a loss function.  We propose two approaches: 
(a) an indirect approach based on the PINN loss function and (b) a direct approach that uses quadrature to approximate the integral in Eq.~\eqref{eq:rm-quadrature}.

\paragraph{PINN-based loss function:} \label{par:tmpinnloss}
We will use a PINN to model $R_m(\vx_0, \vtheta, t)$.  For convenience, we will assume that $\vx_0, \theta$ are fixed and denote $\tau_m(t) = T_m(\vx_0, \vtheta, t)$, $r_m(t) = R_m(\vx_0, \vtheta, t)$ and $f_i(0)$ denote $f_i(\vx_0, \vtheta, 0)$. Note that $\tau_m(t) = \frac{t^{m+1}}{(m+1)!} r_m(t)$. Let 
\begin{equation} \label{eq:tm-pinn-model}
    \varphi_r(\vx_0, \vtheta, t) = f_0(0) + t f_1(0) + \cdots + \frac{t^m}{m!} f_m(0) + \tau_m(t)
\end{equation}
\begin{theorem}
The remainder $\tau_m$ is a solution to the ODE with Lipschitz continuous RHS:
\[ \dot{\tau}_m(t) = f_1(\varphi_r, \vtheta, t)  - \left( f_1(0) + t f_2(0) + \cdots + \frac{t^{m-1}}{(m-1)!} f_m(0) \right) \,.  \]
Furthermore, it has the form $\tau_m(t) = \frac{t^{m+1}}{(m+1)!} r_m(t)$ for a continuous and differentiable function $r_m(t)$  with $r_m(0) =f_{m+1}(\vx_0, \vtheta, 0)$
\end{theorem}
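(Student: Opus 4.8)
The plan is to exploit the observation that, once we set $\tau_m = T_m$, the reconstructed state $\varphi_r$ of Eq.~\eqref{eq:tm-pinn-model} is nothing but the exact solution $\psi(\vx_0,\vtheta,t)$: its first $m+1$ terms are precisely the degree-$m$ Taylor polynomial of $\psi$ at $t=0$, while $\tau_m$ supplies the exact remainder. Moreover, the first Lie derivative of the identity map is the vector field itself, i.e. $f_1 = f$, so $\psi$ satisfies $\dot\psi = f(\psi,\vtheta,t) = f_1(\varphi_r,\vtheta,t)$. This identity is the engine of the entire argument.

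First I would differentiate the explicit expression in Eq.~\eqref{eq:tm-pinn-model} term by term. The polynomial part contributes $f_1(0) + t f_2(0) + \cdots + \frac{t^{m-1}}{(m-1)!} f_m(0)$, and the remainder contributes $\dot\tau_m(t)$. Equating $\dot\varphi_r$ with $f_1(\varphi_r,\vtheta,t)$ and solving for $\dot\tau_m$ reproduces the stated ODE immediately. The only conceptual care needed is to read the result as a genuine first-order ODE in the single unknown $\tau_m$, with $\varphi_r$ understood as the shorthand $P(t) + \tau_m$, where $P(t)$ denotes the known Taylor polynomial.

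I expect the main subtlety — rather than a genuine difficulty — to be the self-referential appearance of $\varphi_r$ inside $f_1(\varphi_r,\ldots)$, and the Lipschitz claim is exactly what turns this apparent circularity into a well-posed ODE. Writing the RHS as $G(\tau,t) = f_1(P(t)+\tau,\vtheta,t) - \dot P(t)$, the subtracted polynomial $\dot P(t)$ is independent of $\tau$, so $\| G(\tau,t) - G(\tau',t) \| = \| f_1(P(t)+\tau,\vtheta,t) - f_1(P(t)+\tau',\vtheta,t) \| \le L_f \| \tau - \tau' \|$ by the Lipschitz continuity of $f_1 = f$ assumed throughout. Hence $G$ inherits the Lipschitz constant $L_f$, and this is precisely what later licenses training a PINN to recover $\tau_m$ (equivalently $r_m$).

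Finally, the structural claim $\tau_m(t) = \frac{t^{m+1}}{(m+1)!} r_m(t)$ is immediate from the factorization $T_m = \frac{t^{m+1}}{(m+1)!} R_m$ established before the theorem, so I simply set $r_m := R_m$. Continuity and differentiability of $r_m$ follow by differentiating the quadrature form of Eq.~\eqref{eq:rm-quadrature} under the integral sign, which is justified since $f$ is $m+2$ times differentiable and hence $f_{m+1}$ is $C^1$. For the initial value I would evaluate Eq.~\eqref{eq:rm-quadrature} at $t=0$: the integrand reduces to $(1-\alpha)^m f_{m+1}(\vx_0,\vtheta,0)$, and since $\int_0^1 (1-\alpha)^m\, d\alpha = \tfrac{1}{m+1}$, the prefactor $(m+1)$ cancels to yield $r_m(0) = f_{m+1}(\vx_0,\vtheta,0)$, completing the argument.
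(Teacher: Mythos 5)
Your proof is correct. Note that the paper states this theorem \emph{without} any proof, so there is no official argument to compare against; your derivation is precisely the one the surrounding development implies, and every step checks out. Specifically: (i) observing that with $\tau_m = T_m$ the model $\varphi_r$ coincides with the exact solution $\psi$, and that $f_1 = f$ so $\dot\psi = f_1(\varphi_r,\vtheta,t)$, then differentiating the polynomial part $P(t)$ term by term yields the stated ODE; (ii) your reading of the Lipschitz claim is the right one --- the RHS $G(\tau,t) = f_1(P(t)+\tau,\vtheta,t) - \dot P(t)$ inherits the Lipschitz constant of $f$ in $\tau$ because $\dot P(t)$ is independent of $\tau$; and (iii) identifying $r_m := R_m$ from the factorization established just before the theorem and evaluating the quadrature form at $t=0$ using $\int_0^1 (1-\alpha)^m\,d\alpha = \tfrac{1}{m+1}$ gives $r_m(0) = f_{m+1}(\vx_0,\vtheta,0)$, with differentiability under the integral sign justified by Leibniz's rule exactly as in the paper's proof of the preceding theorem.
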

\noindent
Now, we can use PINNs to learn the remainder $R_m$ as a function of time using the loss functions:
\begin{compactenum}
\item $ L_{r,g} = \frac{1}{N} \sum_{(\vx_0, \vtheta, t) \in S} \left\| f(\varphi_r, \theta, t) - \left( f_1 + t f_2 + \cdots + \frac{t^{m}}{(m)!} R_m + \frac{t^{m+1}}{(m+1)!} \dot{R}_m \right)  \right\| $,  
and 
\item $L_{r,i} = \frac{1}{N} \sum_{(\vx_0, \vtheta, t) \in S} \left\| f_{m+1}(\vx_0, \vtheta, 0) - R_m(\vx_0, \vtheta, 0) \right\|$
\end{compactenum}
\begin{example}
Consider the Duffing Oscillator case from Ex.~\ref{Ex:duffing-oscillator-model} 
with state variables  $x, y$ and parameter $\delta$. The overall system $\varphi (\vx_0, \delta, t)$ based on Taylor series expansion of order $m=4$ is as follows:
\begin{align*}
    \begin{bmatrix}
        x_0 + ty_0 + \frac{t^2}{2!}(x_0 -\delta y_0 - x_0^3) + \frac{t^3}{3!}f_2(\vx_0, \delta, t) + \frac{t^4}{4!}R_m(\vx_0, \delta, t) \\
        y_0 + t(x -\delta y - x^3) + \frac{t^2}{2!}f_2(\vx_0, \delta, t) + \frac{t^3}{3!}f_3(\vx_0, \delta, t) + \frac{t^4}{4!}R_m(\vx_0, \delta, t)
    \end{bmatrix}
\end{align*}
where, $f_2(\vx_0, \delta, t) = (-\delta(x_0 -\delta y_0 - x_0^3) + y_0(1 - 3x_0^2));~ f_3(\vx_0, \delta, t) = (y_0(-\delta(1 - 3x_0^2) - 6x_0y_0) + (\delta^2 - 3x_0^2 + 1)(-\delta y_0 - x_0^3 + x_0))$ are the second and third derivatives of the system and $R_m$ is the neural network model that is trained to learn the remainder term of the expansion.

Now, in order for the solution of the system $\varphi$ to match the original Duffing Oscillator, we can write  $\dot{\varphi} (\vx_0, \delta, t)$ as:
\begin{align*}
    \begin{bmatrix}
        y_0 + t(x_0 -\delta y_0 - x_0^3) + \frac{t^2}{2!}f_2(\vx_0, \delta, t) + \frac{t^3}{3!}R_m(\vx_0, \delta, t) + \frac{t^4}{4!}\dot{R}_m(\vx_0, \delta, t) \\
        (x_0 -\delta y_0 - x_0^3) + tf_2(\vx_0, \delta, t)  + \frac{t^2}{2!}f_3(\vx_0, \delta, t) + \frac{t^3}{3!}R_m(\vx_0, \delta, t) + \frac{t^4}{4!}\dot{R}_m(\vx_0, \delta, t)
    \end{bmatrix}
\end{align*}
where, $\dot{R}_m$ is the first-order time derivative of the neural network model. The overall loss function for the learning procedure, $L = L_{r,g} + L_{r, i}$ can now be calculated with the both sides of the equation computed as above. The implementation of the training algorithm is provided in Appendix~\ref{appendix:A3}.
\end{example}

\paragraph{Loss Function Through Numerical Quadrature:} Rather than differentiate $R_m$, we can use a numerical approach to directly encode the remainder formula in Eq.~\eqref{eq:rm-quadrature}. Let us subdivide the interval $\alpha \in [0, 1]$ into $K+1$ quadrature points, wherein $\alpha_k = \frac{k}{K}$ for $k \in \{ 0, \ldots, K\}$.  Using the trapezoidal rule, we obtain

\begin{equation}
    R_{m}(\vx_0, \vtheta, t) \approx \frac{m+1}{K} \left( \frac{1}{2} F(0) +  \sum_{k=1}^{K-1}  F\left(\frac{k}{K}\right) + \frac{1}{2} F(1)\right) \,
\end{equation}
wherein $F(\alpha) := (1-\alpha)^m f_{m+1}(\varphi(\vx_0, \vtheta, \alpha t), \vtheta, \alpha t)$. Note that $F(1) = 0$.
However, complexity of this approach depends on the choice of $K$: a small value of $K$ makes the quadrature highly erroneous whereas a larger value makes the approach quite expensive. Further investigation shows us that the quadrature method does not work well for smaller choices of $K$, however, is faster than our other approach. Detailed results and analysis of this behaviour is provided in Appendix~\ref{appendix:numericalquad}.

\paragraph{Analysis:} Let us assume that we have inferred a differentiable model $R_m$ which achieves a maximum possible loss  $ \max_{(\vx_0, \vtheta, t) \in \mathcal{S}} L_{r,g} = K_{rg,\max}$  and
$\max_{(\vx_0, \vtheta, t) \in \mathcal{S}} L_{r,i} = K_{r,i,\max}$ over the compact set of inputs
$\mathcal{S} = \Omega \times \Theta \times [0, T]$. 

\noindent
Let $\psi(\vx_0, \vtheta, t)$ represent the solution map for the ODE and $\varphi_r$ denote the model from Eq.~\eqref{eq:tm-pinn-model}. 
\begin{theorem}
 There exists a constant $K$ such that for all $(\vx_0, \vtheta, t) \in \mathcal{S}$,
\[ \| \varphi_r(\vx_0, \vtheta, t) - \psi(\vx_0, \vtheta, t) \| \leq \frac{t^{m+1}}{(m+1)!} ( K_{r, i} + K_{r,g} t) e^{Kt} \,\]
\end{theorem}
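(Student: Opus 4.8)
The plan is to mirror the Gr\"onwall argument of Theorem~\ref{thm:useful-theorem-1}, but applied to the remainder coefficient $R_m$ rather than to $\varphi_r$ directly, and to pay the factor $\frac{t^{m+1}}{(m+1)!}$ only at the very end. The starting observation is an exact algebraic identity: both the true solution $\psi$ and the model $\varphi_r$ carry the \emph{same} symbolic Taylor polynomial $\sum_{i=0}^{m} \frac{t^i}{i!} f_i(0)$ and differ only in their order-$(m+1)$ remainders. Writing the exact remainder as $\frac{t^{m+1}}{(m+1)!} R_m^*$ (the exact remainder coefficient), subtraction of the two Taylor-with-remainder expressions gives $\varphi_r - \psi = \frac{t^{m+1}}{(m+1)!}\,(R_m - R_m^*)$. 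Consequently it suffices to prove $\|R_m - R_m^*\| \le (K_{r,i} + K_{r,g}\,t)\,e^{Kt}$ on $\mathcal{S}$, and the claimed bound follows by multiplying through by $\frac{t^{m+1}}{(m+1)!}$.

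To bound $\rho := R_m - R_m^*$ I would work from the remainder ODE $\dot{\tau}_m = f(\varphi_r,\vtheta,t) - P(t)$ supplied by the preceding theorem, where $P(t) = f_1(0) + t f_2(0) + \cdots + \frac{t^{m-1}}{(m-1)!} f_m(0)$, together with $\tau_m = \frac{t^{m+1}}{(m+1)!} R_m$. Using $t^{m+1}$ as an integrating factor turns this into $\frac{d}{dt}\big[t^{m+1} R_m\big] = (m+1)!\big(f(\varphi_r,\vtheta,t) - P(t)\big)$, and the identical relation with $\psi$ in place of $\varphi_r$ holds exactly for $R_m^*$. The learned model satisfies its relation only up to the gradient residual $\epsilon_g = \dot{\varphi}_r - f(\varphi_r,\vtheta,t)$ with $\|\epsilon_g\| \le K_{r,g}$, while the initial-condition loss gives $\|\rho(0)\| = \|R_m(0) - f_{m+1}(\vx_0,\vtheta,0)\| \le K_{r,i}$ since $R_m^*(0) = f_{m+1}(0)$. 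Subtracting the two integrated relations and bounding $\|f(\varphi_r) - f(\psi)\| \le L_f\,\|\varphi_r - \psi\| = L_f \frac{t^{m+1}}{(m+1)!}\|\rho\|$ produces an integral inequality in $\|\rho\|$ whose inhomogeneous part is the initial error $K_{r,i}$ plus the time-accumulated residual $K_{r,g}\,t$; Gr\"onwall's inequality (with $K$ absorbing $L_f$ and the bounded powers of $t$ over the compact horizon) then yields $\|\rho(t)\| \le (K_{r,i} + K_{r,g}\,t)\,e^{Kt}$.

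The step I expect to be genuinely delicate is the one where the order-$(m+1)$ structure is converted into the sharp prefactor. A naive Gr\"onwall estimate applied to $\varphi_r - \psi$ directly reproduces only the $O(t\,e^{Kt})$ bound of Theorem~\ref{thm:useful-theorem-1}, because it treats $\epsilon_g$ as a generic $O(1)$ forcing. The improvement hinges on the fact that, by construction, $\varphi_r$ and $\psi$ share Taylor coefficients through order $m$, so the residual vanishes to order $t^m$ at the origin with leading coefficient $\tfrac{1}{m!}\big(R_m(0) - f_{m+1}(0)\big)$ -- precisely the quantity controlled by $K_{r,i}$. Making this rigorous requires carefully handling the removable $1/t$ singularity of the remainder-coefficient ODE (equivalently, verifying that the integrating factor $t^{m+1}$ leaves an $O(1)$ forcing rather than an $O(t^{-m})$ one), and confirming that the pointwise losses $K_{r,i},K_{r,g}$ really do dominate these scaled quantities uniformly on $\mathcal{S}$. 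Everything else -- the Lipschitz estimate on $f$, compactness of $\mathcal{S}$ ensuring $L_f < \infty$, and the final Gr\"onwall bookkeeping -- is routine and parallels the proof of Theorem~\ref{thm:useful-theorem-1}.
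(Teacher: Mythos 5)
Your opening reduction coincides with the paper's: the identity $\varphi_r - \psi = \frac{t^{m+1}}{(m+1)!}\,(R_m - R_m^*)$, which follows from Eq.~\eqref{eq:tm-pinn-model}, reduces the theorem to showing $\|R_m - R_m^*\| \le (K_{r,i} + K_{r,g}\,t)\,e^{Kt}$, and that is exactly how the paper concludes. Where you diverge is in how this intermediate bound is obtained. The paper applies Theorem~\ref{thm:useful-theorem-1} to the PINN learning problem for $R_m$ itself: Gr\"onwall is run on $\rho = R_m - R_m^*$ directly, so that $\|\rho(0)\| \le K_{r,i}$ (valid since $R_m^*(0) = f_{m+1}(\vx_0,\vtheta,0)$) survives as the initial-condition term, with $K_{r,g}$ read as a bound on the gradient residual of the remainder dynamics, and the factor $\frac{t^{m+1}}{(m+1)!}$ is paid only at the end.

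Your route --- integrating $\frac{d}{dt}\bigl[t^{m+1}R_m\bigr] = (m+1)!\,\bigl(f(\varphi_r,\vtheta,t) - P(t) + \epsilon_g\bigr)$ and its exact counterpart and subtracting --- cannot produce the claimed inequality, and the failure sits precisely at the step you flag as delicate. The subtraction gives
\[ t^{m+1}\rho(t) \;=\; (m+1)!\int_0^t \bigl( f(\varphi_r(s),\vtheta,s) - f(\psi(s),\vtheta,s) + \epsilon_g(s)\bigr)\, ds\,, \]
and two things go wrong. First, the boundary term at $s=0$ vanishes identically, because $s^{m+1}R_m(s) \to 0$ and $s^{m+1}R_m^*(s) \to 0$ regardless of the values $R_m(0)$ and $R_m^*(0)$; hence $\rho(0)$ --- the only place $K_{r,i}$ lives --- can never enter this relation, so the inhomogeneous part ``initial error $K_{r,i}$ plus $K_{r,g}t$'' that you assert is simply not available from this identity. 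Second, with $\epsilon_g = \dot\varphi_r - f(\varphi_r,\vtheta,t)$ bounded only pointwise by $K_{r,g}$ (which is how the paper defines $L_{r,g}$ and how you use it), the forcing after dividing by $t^{m+1}$ has size $(m+1)!\,K_{r,g}\,t/t^{m+1} = O(t^{-m})$, not $O(1)$. Running Gr\"onwall on $u(t) = \|t^{m+1}\rho(t)\|$ yields $u(t) \le (m+1)!\,K_{r,g}\,t\,e^{L_f t}$, i.e.\ only $\|\varphi_r - \psi\| \le K_{r,g}\,t\,e^{L_f t}$: a Theorem~\ref{thm:useful-theorem-1}-type bound with no $\frac{t^{m+1}}{(m+1)!}$ prefactor and no $K_{r,i}$, which is strictly weaker than the claim for small $t$. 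So the verification you defer would fail rather than succeed. To recover the stated theorem you must not weight by $t^{m+1}$ before applying Gr\"onwall: argue at the level of $\rho$ itself, as the paper does, with $K_{r,g}$ interpreted as bounding the residual of the $R_m$-equation (equivalently, the $\varphi_r$-residual weighted by $\frac{(m+1)!}{t^{m+1}}$). Your computation is, in fact, a clean demonstration of why that reinterpretation --- left implicit in the paper's one-line proof --- is indispensable.
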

\begin{proof}
Applying Theorem~\ref{thm:useful-theorem-1} to the PINN learning problem for $R_m$, and letting $R_m^*$ be the exact remainder obtained from the ODE solution map $\psi$, we obtain that \[ \| R_m - R_m^* \| \leq ( K_{r,i} + K_{r,g}t) e^{Kt} \]
In turn, from Eq.~\eqref{eq:tm-pinn-model}, we obtain that $\|\varphi - \psi\| \leq  \frac{t^{m+1}}{(m+1)!} ( K_{r, i} + K_{r,g} t) e^{Kt}$.
\end{proof}

\section{Results}

In this section, we present results of running the three models against seven numerical ODE benchmark systems with varying dimensionality and number of parameters. The seven systems are as follows: a) \textit{Duffing Oscillator}, b) \textit{Damped Pendulum}, c) \textit{Lorenz Attractor}, d) \textit{Lotka-Volterra system}, e) \textit{Rikitake Model}, f) \textit{Susceptible-Infected-Recovered (SIR) Model} g) \textit{Susceptible-Exposed-Infected-Recovered (SEIR) model}. Further, we show that our method scales well on higher dimensional systems. The results and prediction performance are very similar to the seven benchmarks ODEs mentioned above and their analysis can be found in Appendix~\ref{appendix:A2_3}.

Detailed definitions for all the seven dynamical system along with the initial condition range $\Omega$ and parameter range $\Theta$ are provided in Appendix \ref{appendix:A1}. The PINN, Higher-Order PINN (HO-PINN), and Taylor-Model PINN (TM-PINN) are all represented using a 1 layer, 64 hidden unit shallow neural network. The input sizes vary according to the dynamical system, however, for TM-PINN we use separate neural networks to learn each dimension of the system. All the models were trained on Apple M3 Pro 14-Core GPU, running JAX 0.4.x with Metal 3 support. The seeds for all dataset creation and model initialization are provided in this \href{https://github.com/cuplv/TM-PINNs}{codebase}. 

\begin{figure}[ht!]
    \centering
    \includegraphics[trim=0 15 10 10,clip,width=\linewidth]{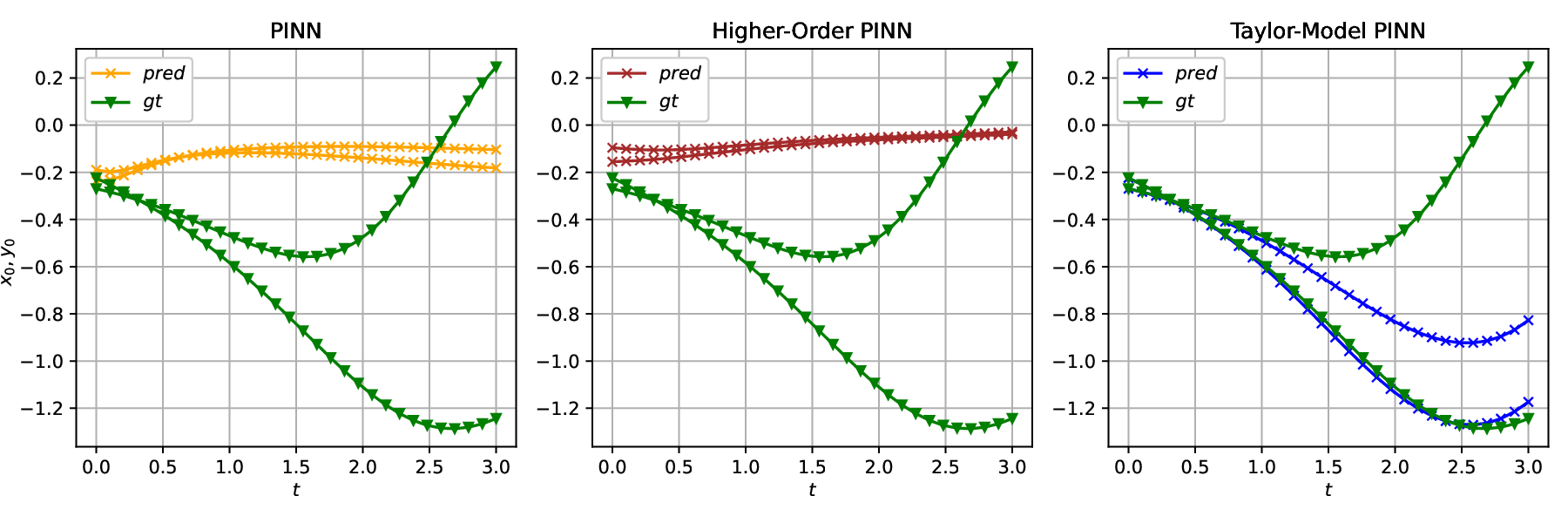}
    \caption{Prediction performance of the three models on \textbf{Duffing Oscillator} system. The initial condition are set to $\vx_0=[-0.224, -0.269]$ and $\vtheta_0=[0.327]$. \{``gt"=ground truth, ``pred"=prediction\}}
    \label{fig:do-model-predictions}
\end{figure}

\begin{figure}[ht!]
    \centering
    \includegraphics[trim=0 15 10 10,clip,width=\linewidth]{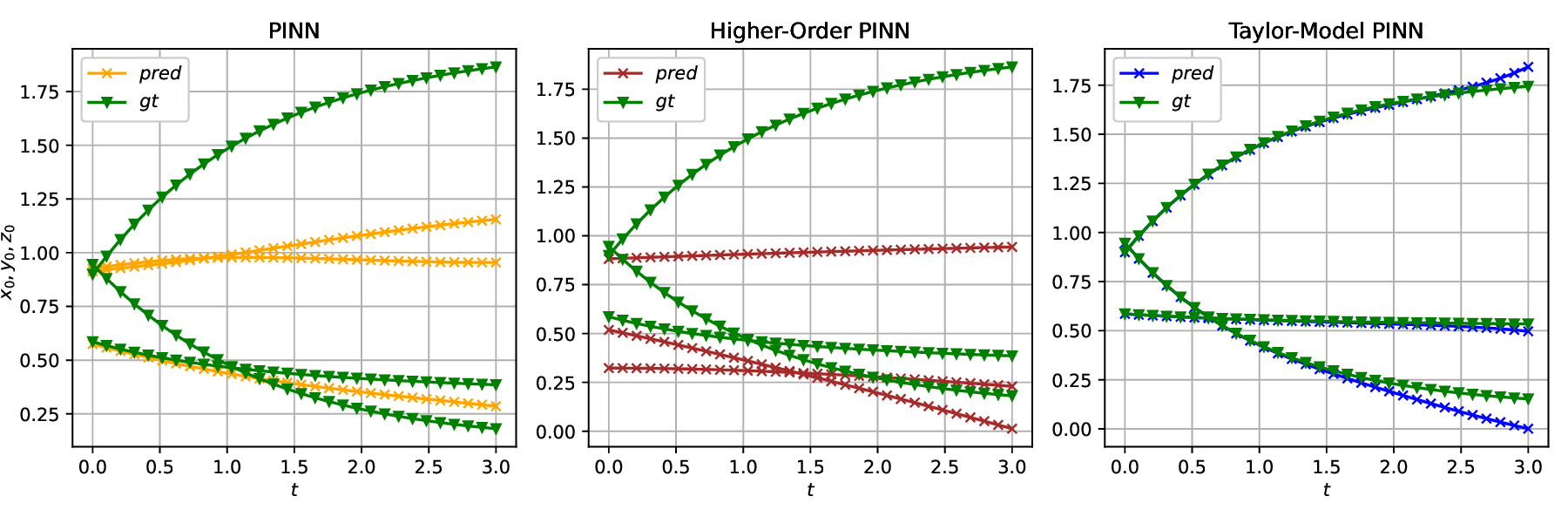}
    \caption{Prediction performance of the three models on \textbf{SIR} system. The initial conditions are set to $\vx_0=[0.586, 0.945, 0.899]$ and $\vtheta_0=[0.81, 0.1]$. \{``gt"=ground truth, ``pred"=prediction\}}
    \label{fig:sir-model-predictions}
\end{figure}

\begin{figure}[ht!]
    \centering
    \includegraphics[trim=0 15 10 10,clip,width=\linewidth]{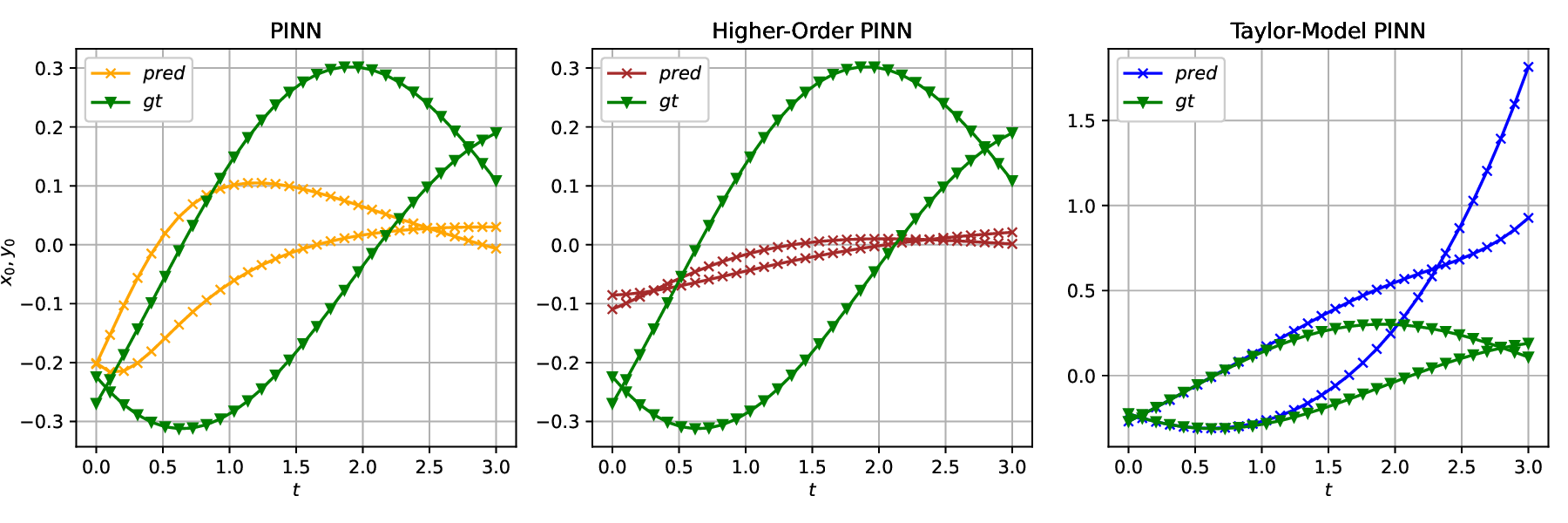}
    \caption{Prediction performance of the three models on \textbf{Damped Pendulum} system. The initial conditions and parameters are set to $\vx_0=[-0.224, -0.269]$ and $\vtheta_0=[0.288, 7.2]$. \{``gt"=ground truth, ``pred"=prediction\}}
    \label{fig:dp-model-predictions}
\end{figure}

\subsection{Exact collocation point match for shorter time periods}

We can first understand the performance of these models by looking at collocation points for shorter time windows. In Fig.~\ref{fig:do-model-predictions},~\ref{fig:sir-model-predictions},~\ref{fig:dp-model-predictions} we can observe the performance of standard PINNs does not really match the collocation points after the initial few sample points. This behavior can also be observed with HO-PINNs where the initial conditions also do not tend to match. However, with further qualitative results from Tab.~\ref{tab:results} we can observe that TM-PINNs seem to match well with the system output for larger time horizons, up to two seconds for all the benchmark systems.

\begin{table}[ht!]
    \centering
    \small
    \begin{tabular}{lcccccccc}
        \hline
        Method & DO (2,1) & DP (2,2) & LV (2,4) & R (3,2) & SIR (3,2) & LoA (3,3) & SEIR (4,6) & Time (sec) \\ \hline
        PINN & 0.130 & 0.109 & 0.035 & 0.038 & 0.08 & 0.019 & \textbf{0.069} & 1 \\
        HO-PINN & 0.141 & 0.210 & 0.070 & 0.042 & 0.329 & 0.029 & 246.2 & 1 \\
        TM-PINN & \textbf{0.003} & \textbf{0.012} & \textbf{0.004} & \textbf{0.016} & \textbf{0.002} & \textbf{0.008} & 0.784 & 1 \\ \hline
        PINN & 0.229 & \textbf{0.143} & 0.079 & 0.09 & 0.146 & \textbf{0.022} & \textbf{0.135} & 2 \\
        HO-PINN & 0.24 & 0.215 & 0.133 & 0.09 & 0.334 & 0.035 & 309.2 & 2 \\
        TM-PINN & \textbf{0.048} & 0.185 & \textbf{0.06} & \textbf{0.085} & \textbf{0.023} & 0.074 & 12.28 & 2 \\ \hline
        PINN & 0.312 & \textbf{0.16} & \textbf{0.169} & 0.247 & 0.188 & \textbf{0.025} & \textbf{0.202} & 3 \\
        HO-PINN & 0.329 & 0.21 & 0.251 & 0.248 & 0.354 & 0.038 & 376.4 & 3 \\
        TM-PINN & \textbf{0.170} & 0.677 & 0.453 & \textbf{0.206} & \textbf{0.079} & 0.220 & 55.17 & 3 \\ \hline
    \end{tabular}
    \caption{Results showing (MAE$\downarrow$) on Taylor-Model PINN (TM-PINN) compared against vanilla PINN and Higher-Order PINN (HO-PINN) on seven different dynamical system models across varying prediction time. \{DO=Duffing Oscillator, DP=Damped Pendulum, LV=Lotka-Volterra, R=Rikitake, SIR=Susceptible-Infected-Recovered, LoA=Lorenz Attractor, SEIR=S-Exposed-IR\}. The two numbers next to each model name in the header row  show the number of state variables and parameters, respectively.}
    \label{tab:results}
\end{table}

To understand why TM-PINNs exhibit poorer performance over longer time horizons Fig.~\ref{fig:error-prop}, we can examine the third graph in Fig.~\ref{fig:dp-model-predictions}. This figure shows that TM-PINNs accurately capture the evolving dynamics of the Damped Pendulum model up to approximately $1.4$ seconds. However, beyond this point, the residual term fails to converge effectively, causing predictions to exceed the expected output range of this dynamical system. This observation is further supported by the error plots and metrics presented in Appendix~\ref{appendix:A2}.  
In general, we observe that TM-PINNs achieve convergence significantly faster than the other methods. However, each epoch is significantly slower due to the more complicated loss functions that involve additional terms. We find that on average across all systems, TM-PINNs require approximately $8$ minutes to train for an average of $500$ epochs. In comparison, PINNs take 2 minutes to complete $10^5$ epochs, while HO-PINNs require $8$ minutes. 
\begin{figure}[ht!]
    \centering
    \includegraphics[trim=0 15 10 10,clip,width=0.9\linewidth]{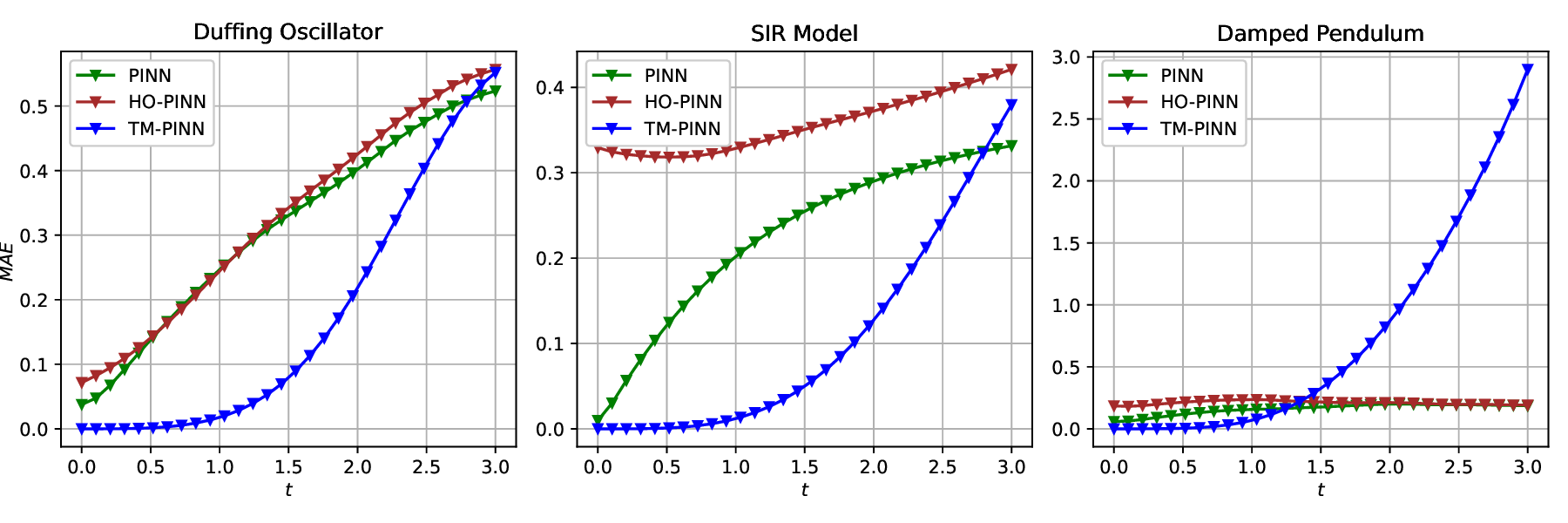}
    \caption{Avg. MAE plotted at various time points throughout the simulation of PINN, HO-PINN and TM-PNN (our approach) on three different ODEs presented in Fig.~\ref{fig:do-model-predictions},~\ref{fig:sir-model-predictions},~\ref{fig:dp-model-predictions}.}
    \label{fig:error-prop}
\end{figure}

\vspace{-0.5cm}

\section{Conclusion}

We have presented an approach that uses Taylor models to cast the problem of learning solutions as one for learning the remainder term from the Taylor series expansion of the solution. A theoretical analysis of our approach yields error bounds that indicate that the approach can be quite effective over smaller time horizons, while PINNs have an error growth that makes them less error-prone over longer time horizons. The extension of our work to solving PDEs using Taylor series expansions and alternatives to characterizing the remainder term remains an important part of our future work. The challenge therein lies in carefully characterizing the boundary conditions and initial conditions for the higher-order terms in the Taylor series expansion of the PDE solution.  We are also interested in other types of series expansions that can approximate the ODE solutions, such as Fourier series expansions, expansions based on special functions, especially for ODEs/PDEs with oscillatory solutions~\cite{Agarwal+ORegan/2009/Ordinary}. Specialized techniques such as power-series expansions based on Koopman operators and  convergent power series expansions, especially for Lotka-Volterra-type systems, are also amenable to the approaches developed in this paper~\cite{Basor+Morrison/697/Analytic}.

\acks{We would like to acknowledge the valuable discussions, feedback, and resources provided by our colleagues and external collaborators through out the process. This material is based upon work supported by the United States Air Force and DARPA under Contract No. FA8750-23-C-0519 and HR0011-24-9-0424, and the U.S. Army Research Laboratory under Cooperative Research Agreement W911NF-17-2-0196, the US National Science Foundation (NSF)  under awards \# CCF-2422136 and CPS-1836900, and  NCCIH grant \# R01AT012288. Any opinions, findings, and conclusions expressed are those of the author(s) and do not necessarily reflect the views of the United States Air Force, DARPA, the U.S. Army Research Laboratory, or the United States Government.}

\bibliography{references}

\newpage
\begin{appendices}
\appendix \section{Benchmark systems} \label{appendix:A1}

This section gives a detailed explanation of the seven benchmark ODEs that were considered for the experiments. We have mentioned the parametric and initial condition ranges that were considered for training the models. Further, we made sure to fix a unique seed value for generating all the datasets in the ranges mentioned below.
\subsection{Duffing Oscillator}\label{method:do}
A nonlinear differential equation that is used to represent the dynamics of a damped oscillator is called a Duffing equation. It can be represented in two-dimensional form as follows:
\begin{align*}
    &\dot{\mathrm{x}} = y ~;~ \dot{\mathrm{y}} = x - x^3 - \delta y ~;~~ (x, y) \in \Omega \\
    &\Omega \in [-0.5, 0.5] \times [-0.5, 0.5]
\end{align*}
where, $\delta \in [0.1, 0.5]$ is the damping factor. The initial conditions and parameters of the model are uniform-randomly sampled from the range. The second derivatives w.r.t. time $t$ and the Lie derivatives w.r.t the parameter of the model $\delta$ as follows:
\begin{align*}
    &\ddot{\mathrm{x}} = x - x^3 - \delta y ~;~ \ddot{\mathrm{y}} = (1 - 3x^2)y - \delta  (x - x^3 - \delta y) \\
    &\mathrm{x}_{\delta} = 0 ~;~ \mathrm{y}_{\delta} = - y
\end{align*}

\subsection{Damped Pendulum}\label{method:dp}
The equations of motion of a pendulum can be represented in two-dimensional form as follows:
\begin{align*}
    &\dot \theta = \omega ~;~ \dot \omega = -b \omega - \frac{g}{L} \sin \theta ~;~~ (\theta, \omega) \in \Omega \\
    &\Omega \in [-0.5, 0.5] \times [-0.5, 0.5]
\end{align*}
where, $L \in [1, 10]$ is the length of the string attached to the pendulum and $b \in [0.01, 0.5]$ is the air resistance, $\theta$ and $\omega$ are the state variables representing the angle and angular momentum of the pendulum. The initial conditions and parameters of the model are similarly sampled from the range. The second derivatives w.r.t. time $t$ and the Lie derivatives w.r.t the parameters $b, L$ as follows:
\begin{align*}
    &\ddot{\theta} = -\frac{g sin\theta}{L} - b\omega ~;~ \ddot{\omega} = - \frac{g \omega cos\theta}{L} - b \big( \frac{-g sin \theta}{L} - b \omega \big) \\
    &\theta_{b, L} = 0+0 = 0 ~;~ \omega_{b, L} = 0+\frac{g}{L^2}sin\theta
\end{align*}

\subsection{Lotka-Volterra System}\label{method:lv}
A simple biological dynamical system that describes the behavior of two species where one behaves as the predator and the other a prey is called the Lotka-Volterra system. It can be represented in two-dimensional form as follows:
\begin{align*}
    &\dot x = \alpha x - \beta xy ~;~ \dot y = -\gamma y + \delta xy ~;~~ (x, y) \in \Omega \\
    &\Omega \in [0, 1] \times [0, 1]
\end{align*}
where, $\alpha \in [0.6, 1]$ represent the per capita growth rate of the prey, $\beta \in [0.2, 0.5]$ represent the presence of predator in the prey death rate, $\gamma \in [0.5, 1.0]$ is the per capita death rate of the predator, and $\delta \in [0.1, 0.4]$ represents presence of prey in predator growth rate. Further, $x, y$ are the state variables for the system representing the prey and predator populations, respectively. The initial condition and parameters are uniform-randomly sampled from the range. The second derivative w.r.t time $t$ and the Lie derivatives w.r.t the parameters of the model $\alpha, \beta, \gamma, \delta$ are as follows:
\begin{align*}
    &\ddot{x} = -x(xy\delta-y\gamma)\beta + (x\alpha -xy \beta) (\alpha - y\beta) ~;~ \ddot{y} = (x\alpha - xy \beta) \gamma \delta + (xy\delta- y \gamma)( x\delta-\gamma) \\
    &x_{\alpha, \beta, \gamma, \delta} = x - xy ~;~~ y_{\alpha, \beta, \gamma, \delta} = -y + xy
\end{align*}

\subsection{Rikitake Attractor}
A dynamical system that models the behavior of a coupled magnetic dynamo, is called the Rikitake attractor. It can be simplified and represented in three-dimensional form as follows:  
\begin{align*}  
    &\dot{x} = -\mu x + yz ~;~ \dot{y} = -\mu y + x(z - h) ~;~ \dot{z} = 1 - xy ~;~~ (x, y, z) \in \Omega \\
    &\Omega \in [-0.5, 0.5] \times [-0.5, 0.5] \times [-0.5, 0.5]
\end{align*}
where, $\mu = (\omega_1 - \omega_2)\sqrt{CM/GL} \in [0.3, 0.9]$ consists of terms $C, G, L, M$ which are the moment of inertia, applied torque, self-inductance and mutual-inductance of the dynamos, which are rotated to $\omega_1, \omega_2$ angular velocities respectively. Further, $h = R\sqrt{C/GLM} \in [0.3, 0.9]$ is another simplified term, where $R$ is the electrical resistance. The initial condition and parameters are uniform-randomly sampled from the range. The second derivative with respect to time $t$ and the Lie derivatives with respect to the parameter $\mu$ of the model are omitted as they get larger to be represented. One can find the symbolic derivations of these systems in the code provided.

\subsection{Lorenz Attractor}
A dynamical system that models atmospheric convection and exhibits chaotic behavior is called the Lorenz attractor. It can be represented in three-dimensional form as follows:  
\begin{align*}  
    &\dot{x} = \sigma (y - x) ~;~ \dot{y} = x (\rho - z) - y ~;~ \dot{z} = xy - \beta z ~;~~ (x, y, z) \in \Omega \\
    &\Omega \in [0, 1] \times [0, 1] \times [0, 1]
\end{align*}  
where, $\sigma \in [0, 1]$ represent the Prandtl number, $\rho \in [0, 1]$ the Rayleigh number, and $\beta \in [0, 1]$ geometric factor. The variables $x, y, z$ correspond to the convective flow, temperature difference, and vertical temperature variation, respectively. The initial condition and parameters are uniform-randomly sampled from the range. The second derivative with respect to time $t$ and the Lie derivatives with respect to the parameters $\sigma, \rho, \beta$ of the model are as follows:  
\begin{align*}  
    &\ddot{x} = \sigma (\dot{y} - \dot{x}) ~;~ \ddot{y} = \dot{x} (\rho - z) + x (-\dot{z}) - \dot{y} ~;~ \ddot{z} = \dot{x} y + x \dot{y} - \beta \dot{z} \\  
    &x_{\sigma} = y - x ~;~~ y_{\rho} = x ~;~~ z_{\beta} = -z  
\end{align*}

\subsection{Susceptible-Infected-Recovered (SIR) Model}
The SIR model is an epidemiological framework used to describe the spread of diseases where individuals transition between being susceptible (S), infected (I), and recovered (R). It can be represented in three-dimensional form as follows:
\begin{align*}
    &\dot S = -\frac{IS \beta}{N} ~;~ \dot I = \frac{IS \beta}{N} - I \gamma ~;~ \dot R = I \gamma ~;~~~ (S, I, R) \in \Omega \\
    &\Omega \in [0, 1] \times [0, 1] \times [0, 1] 
\end{align*}
where, $\beta \in [0, 1]$ is the probability of disease transmission per contact, $\gamma \in [0, 1]$ is the per-capita recovery rate. The initial conditions and parameters of the model are uniform-randomly sampled from the range. The second derivatives w.r.t time $t$ and the Lie derivatives w.r.t. the parameters $\beta, \gamma$ are as follows:
\begin{align*}
    &\ddot S = \frac{K S \beta}{N} + L ~;~ \ddot I = - L + \frac{K(-N\gamma + S \beta)}{N} ~;~ \ddot R = K \gamma \\
    &S_{\beta, \gamma} = - \frac{SI}{N} ~;~ I_{\beta, \gamma} = \frac{SI}{N} - I ~;~ R_{\beta, \gamma} = I
\end{align*}
where, $K = \frac{IS \beta}{N} - I \gamma$, $L = \frac{\beta^2 I^2 S}{N^2}$

\subsection{Susceptible-Exposed-Infected-Recovered (SEIR) Model}
A compartmental epidemiological model that describes the spread of infectious diseases with temporary immunity is called the SEIR model. It can be represented in four-dimensional form as follows:  
\begin{align*}
    &\dot{S} = \mu (N - S) - \frac{\beta S I}{N} + \omega R ~;~~ \dot{E} = \frac{\beta S I}{N} - (\sigma + \mu) E ~; \\
    &\dot{I} = \sigma E - (\mu + \gamma + \alpha) I ~;~~ \dot{R} = \gamma I - (\mu + \omega) R ~;~~ S, E, I, R \in \Omega \\
    &\Omega \in [0, 0.99] \times [0, 0.99] \times [0, 0.5] \times [0, 0.5]
\end{align*}
where, $\mu \in [0.01, 0.02]$ represent the birth/death rate, $\beta \in [0.01, 0.02]$ transmission rate, $\sigma \in [0.1, 0.2]$ is the incubation rate, $\gamma \in [0.01, 0.2]$ is the recovery rate, $\alpha \in [0, 0.5]$ is the disease-induced death rate, and 
$\omega \in [0.1, 1]$ is the loss of immunity rate. Further, the variables $S, E, I, R$ correspond to the susceptible, exposed, infected, and recovered populations, respectively. The initial condition and parameters are uniform-randomly sampled from the range. The second derivative with respect to time $t$ and the Lie derivatives with respect to the parameter $\mu$ of the model are omitted as they get larger to be represented.

\section{Numerical Quadrature insights}\label{appendix:numericalquad}
To support our final approach of learning the remainder term, we proposed the numerical quadrature method. In this section, we define and compute the method against three systems~\ref{method:do},~\ref{method:dp},~\ref{method:lv} to showcase how approximation using quadratures affect the performance. We set the number of quadrature points to 10 and obtain the remainder term $R_m$ as follows:
\begin{align*}
    R_{m}(\vx_0, \vtheta, t) \approx \frac{4}{10} \left( \frac{1}{2} F(0) + F(0.1) + F(0.2) + \dots + \frac{1}{2} F(1)\right) \,
\end{align*}
The results shown in Tab.~\ref{tab:results-nq} indicate that for smaller number of quadrature points, the error increases rapidly over time making the predictions highly erroneous across all systems. However, since the number of points are small the algorithm takes less time to train the neural network as there is no longer expensive differential computation at each epoch. Further, Fig.~\ref{fig:nq-model-performance} shows the prediction performance of training the neural network using both the approaches on the first two dynamical system (we see similar performance for the Lotka-Volterra system as well).

\begin{table}[t]
\centering
\small
\begin{tabular}{lcclc}
\hline
Method & DO (2,1) & DP (2,2) & LV (2,4) & Time (sec) \\ \hline
TM-PINN & 0.003 & \textbf{0.012} & 0.004 & 1 \\
TM-PINN-NQ & \textbf{0.002} & 0.016 & \textbf{0.002} & 1 \\ \hline
TM-PINN & 0.048 & \textbf{0.185} & \textbf{0.06} & 2 \\
TM-PINN-NQ & \textbf{0.046} & 0.477 & 0.077 & 2 \\ \hline
TM-PINN & \textbf{0.170} & \textbf{0.677} & \textbf{0.453} & 3 \\
TM-PINN-NQ & 0.218 & 4.155 & 0.665 & 3 \\ \hline
\end{tabular}
\caption{Results showing (MAE$\downarrow$) on Taylor-Model PINN using 10 Numerical Quadrature points (TM-PINN-NQ) compared against TM-PINNs as reported in the main paper on three different dynamical systems. \{DO=Duffing Oscillator, DP=Damped Pendulum, LV=Lotka-Volterra\}.}
\label{tab:results-nq}
\end{table}

\begin{figure}[t]
    \centering
    \includegraphics[trim=0 15 10 10,clip,width=\linewidth]{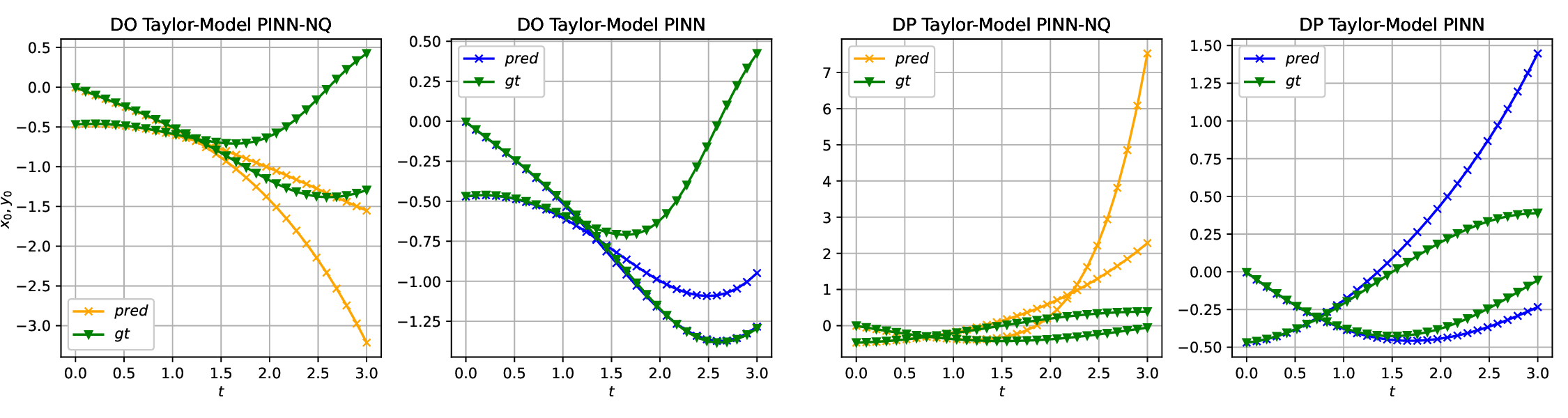}
    \caption{Prediction performance of the both models Taylor-Model PINN using 10 Numerical Quadrature points (TM-PINN-NQ) and TM-PINN, on \{DO=Duffing Oscillator\} and \{DP=Damped Pendulum\} systems. The initial conditions of the dynamical system are set as before.}
    \label{fig:nq-model-performance}
\end{figure}

\section{Supporting results from other systems} \label{appendix:A2}
In this section, we continue with the results from our experiments on the benchmark systems. First, we can continue looking at the prediction plots for the rest of the systems. Secondly, we look at the error propagation through time for the various models. Finally, we look at scaling the systems to larger dimensional ODEs and study the predictive performance in this scenario. 

\subsection{Prediction performance graphs}
The following Fig. \ref{fig:lv-model-performance}, \ref{fig:r-model-performance}, \ref{fig:seirs-model-performance}, \ref{fig:lo-model-performance} show the prediction performance of the three models across the same set of initial condition and parameter space. 
\begin{figure}[ht!]
    \centering
    \includegraphics[trim=0 15 10 10,clip,width=\linewidth]{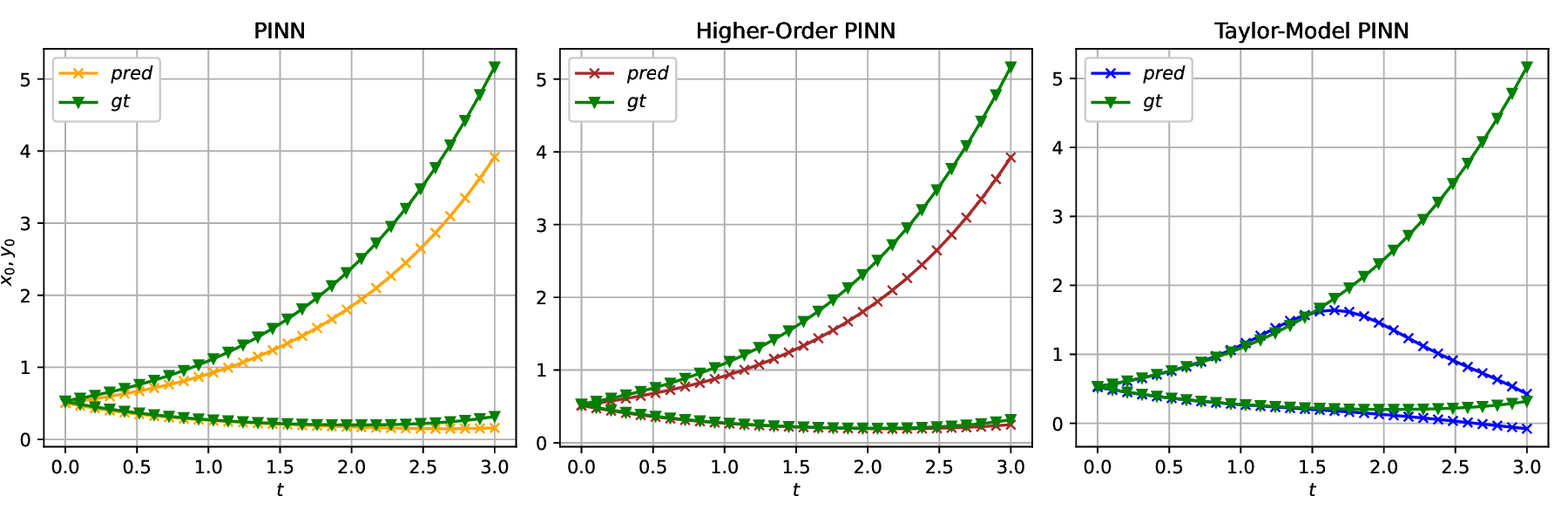}
    \caption{Prediction performance of the three models on \textbf{Lotka-Volterra} system. The initial conditions are set to $\vx_0=[0.53, 0.52]$ and $\vtheta_0=[0.87, 0.43, 0.95, 0.39]$}
    \label{fig:lv-model-performance}
\end{figure}

\begin{figure}[ht!]
    \centering
    \includegraphics[trim=0 15 10 10,clip,width=\linewidth]{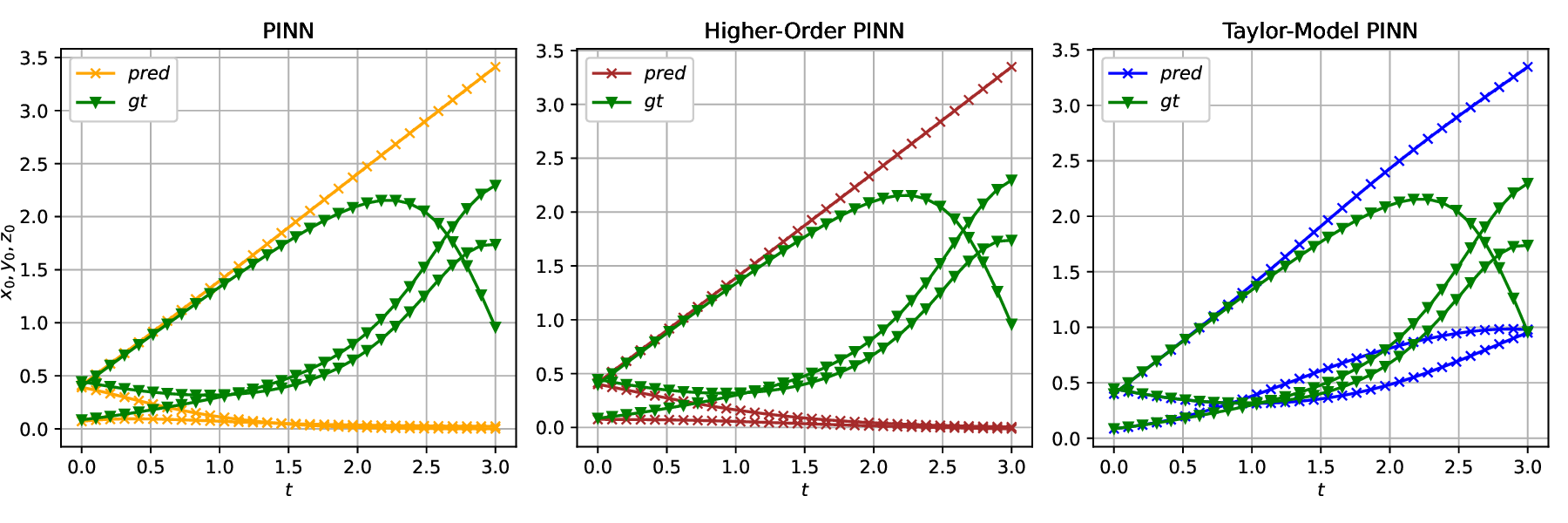}
    \caption{Prediction performance of the three models on \textbf{Rikitake} system. The initial conditions of the dynamical system are set to $\vx_0=[0.08, 0.44, 0.39]$ and $\vtheta_0=[0.49, 0.69]$}
    \label{fig:r-model-performance}
\end{figure}

\begin{figure}[ht!]
    \centering
    \includegraphics[trim=0 15 10 10,clip,width=\linewidth]{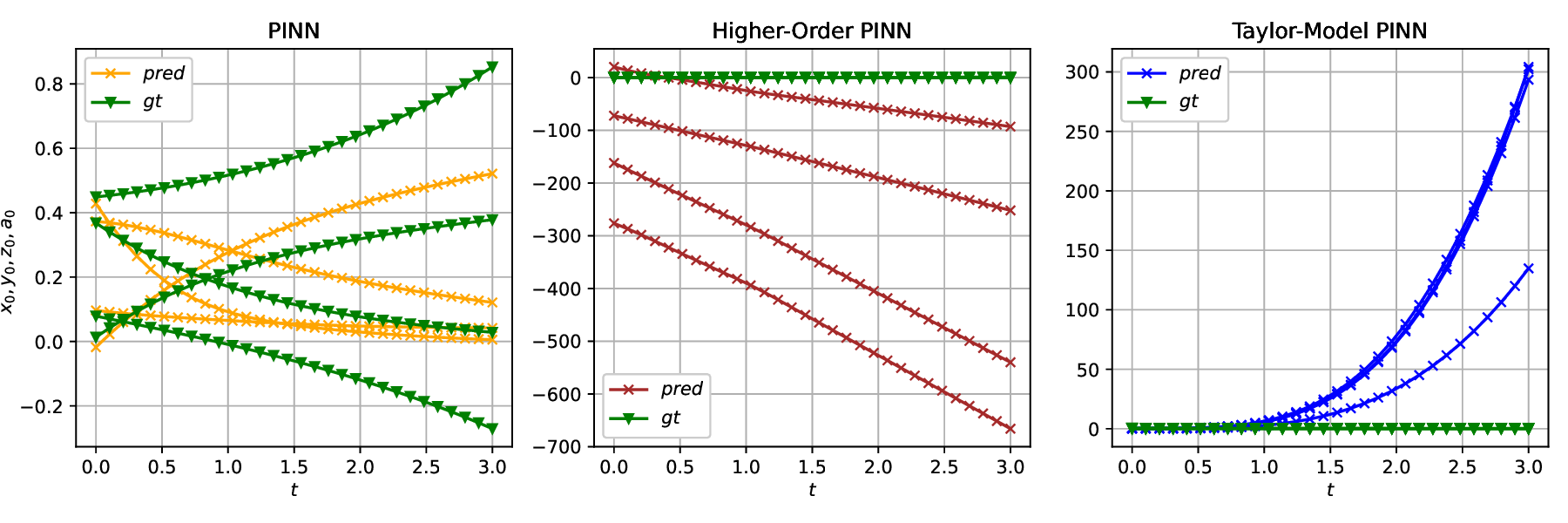}
    \caption{Prediction performance of the three models on \textbf{SEIRS} system. The initial conditions are set to $\vx_0=[0.01, 0.078, 0.45, 0.36]$ and $\vtheta_0=[0.01, 0.2, 0.17, 0.76, 0.36,
        0.049]$}
    \label{fig:seirs-model-performance}
\end{figure}

\begin{figure}[ht!]
    \centering
    \includegraphics[trim=0 15 10 10,clip,width=\linewidth]{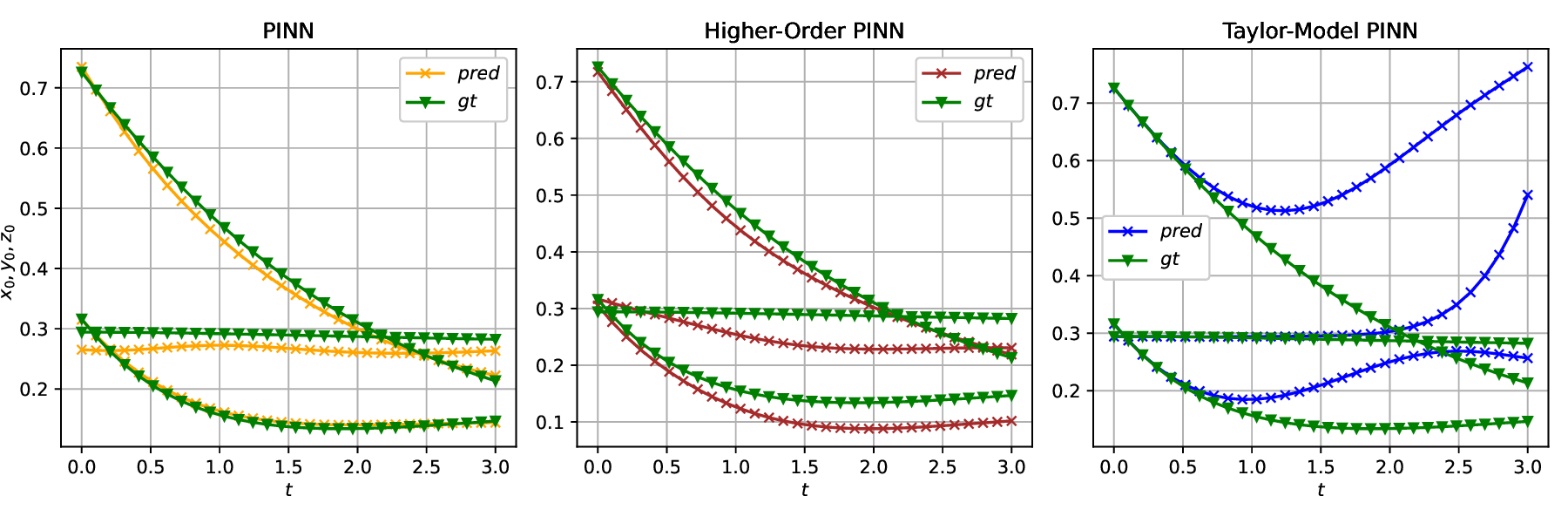}
    \caption{Prediction performance of the three models on \textbf{Lorenz Attractor} system. The initial conditions of the dynamical system are set to $\vx_0=[0.29, 0.32, 0.73]$ and $\vtheta_0=[0.53, 0.03, 0.79]$}
    \label{fig:lo-model-performance}
\end{figure}

\begin{figure}[ht!]
    \centering
    \includegraphics[trim=0 15 10 10,clip,width=\linewidth]{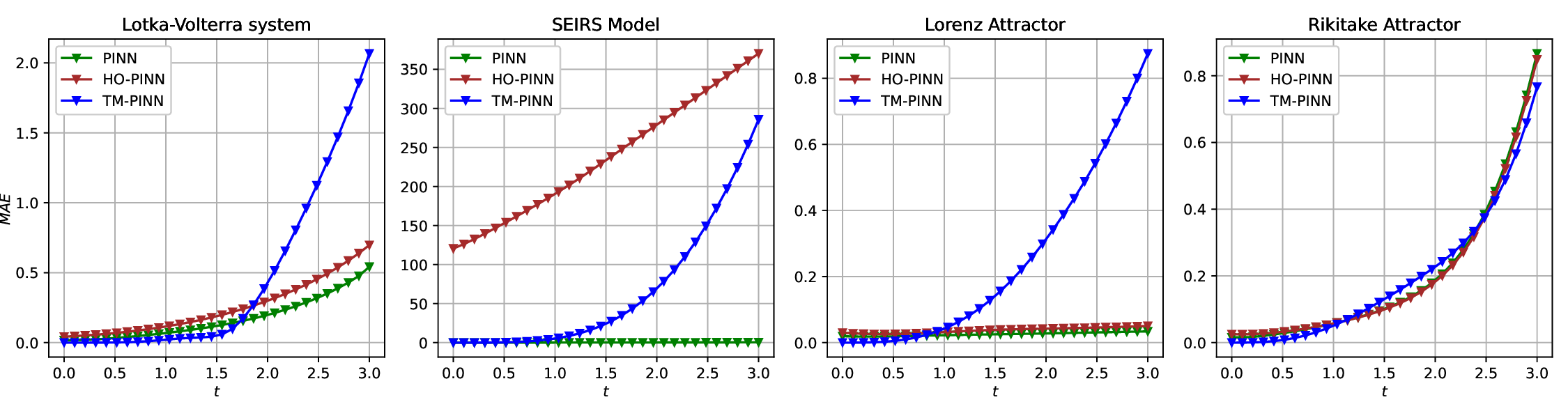}
    \caption{MAE plotted at various time points the three different models comparing the PINN, HO-PINN and the TM-PNN (our approach).}
    \label{fig:error-l1}
\end{figure}

\begin{table}[ht!]
\centering
\small
\begin{tabular}{lcccccccc}
\hline
Method & DO (2,1) & DP (2,2) & LV (2,4) & R (3,2) & SIR (3,2) & LoA (3,3) & SEIR (4,6) & Time (sec) \\ \hline
PINN & 0.178 & 0.154 & 0.051 & 0.055 & 0.120 & 0.027 & \textbf{0.102} & 1 \\
HO-PINN & 0.186 & 0.264 & 0.094 & 0.055 & 0.399 & 0.039 & 253.7 & 1 \\
TM-PINN & \textbf{0.008} & \textbf{0.046} & \textbf{0.007} & \textbf{0.028} & \textbf{0.004} & \textbf{0.021} & 1.346 & 1 \\ \hline
PINN & 0.307 & \textbf{0.195} & 0.136 & 0.149 & 0.202 & \textbf{0.033} & \textbf{0.198} & 2 \\
HO-PINN & 0.321 & 0.267 & 0.210 & 0.145 & 0.409 & 0.050 & 321.901 & 2 \\
TM-PINN & \textbf{0.119} & 0.664 & \textbf{0.123} & \textbf{0.142} & \textbf{0.045} & 0.167 & 20.78 & 2 \\ \hline
PINN & 0.411 & \textbf{0.214} & \textbf{0.349} & 0.469 & 0.255 & \textbf{0.040} & \textbf{0.299} & 3 \\
HO-PINN & 0.433 & 0.262 & 0.481 & 0.468 & 0.431 & 0.059 & 396.9 & 3 \\
TM-PINN & \textbf{0.337} & 2.130 & 0.912 & \textbf{0.371} & \textbf{0.150} & 0.478 & 92.44 & 3 \\ \hline
\end{tabular}
\caption{Results showing (RMSE$\downarrow$) on TM-PINN compared against vanilla PINN and HO-PINN on seven different dynamical system models across varying prediction time. \{DO=Duffing Oscillator, DP=Damped Pendulum, LV=Lotka-Volterra, R=Rikitake, SIR=Susceptible-Infected-Recovered, LoA=Lorenz Attractor, SEIR=S-Exposed-IR \}.}
\label{tab:results2}
\end{table}

\subsection{Results and error propagation}
In Fig.~\ref{fig:error-l1} we can further see that across all the remaining four systems, as noted earlier, TM-PINNs have negligent mean absolute error across the first few seconds of the evolution. But, the error tends to grow as prediction time horizon increases. Finally, we also compute the Root Mean Square Error (RMSE) and notice a similar performance across models Tab.~\ref{tab:results2}.

\subsection{Larger systems}\label{appendix:A2_3}
Addressing the reviews, we ran our method against two larger systems. (a) A multi-coupled damped oscillator, building upon the system provided in~\cite{ROS01a}, (b) Michaelis-Menten kinetics system similar to the system provided in~\cite{klipp2016systems}.

\subsubsection{Multi-Coupled Damped Oscillator}
The multi-coupled damped oscillator is a 8-dimensional nonlinear differential equation similar to the damped oscillator model used in our benchmarks (Appendix~\ref{appendix:A1}) and can be represented as follows:
\begin{align*}
    \dot{\mrx}_1 &= \mry_1 ~;~ \dot{\mry}_1 = \mu(1 - \mrx_1^2)\mry_1 - \mrx_1 + \delta(\mrx_2 - 2\mrx_1 + \mrx_4) \\
    \dot{\mrx}_2 &= \mry_2 ~;~ \dot{\mry}_2 = \mu(1 - \mrx_2^2)\mry_2 - \mrx_2 + \delta(\mrx_3 - 2\mrx_2 + \mrx_1) \\
    \dot{\mrx}_3 &= \mry_3 ~;~ \dot{\mry}_3 = \mu(1 - \mrx_3^2)\mry_3 - \mrx_3 + \delta(\mrx_4 - 2\mrx_3 + \mrx_2) \\
    \dot{\mrx}_4 &= \mry_4 ~;~ \dot{\mry}_4 = \mu(1 - \mrx_4^2)\mry_4 - \mrx_4 + \delta(\mrx_1 - 2\mrx_4 + \mrx_3) 
\end{align*}
where, $\{\mrx_1, \mry_1, \mrx_2, \mry_2, \mrx_3, \mry_3, \mrx_4, \mry_4\} \in [-0.5, 0.5]$ are the state variables of the system and $\delta, \mu \in [0.1, 0.5]$ are the damping factors of the coupled system. The initial conditions and parameters of the model are uniform-randomly sampled from the range. The second derivatives w.r.t. time $t$ and the Lie derivatives w.r.t the parameter of the model $\delta, \mu$ are symbolically created.

\subsubsection{Michaelis-Menton Enzyme Kinetics}
The Michaelis-Menten system explains how presence of some $i$-enzyme concentrations in an enzyme-substrate complex can cause kinetic rate enhancement of a reaction. We extend this system to 6-dimensions i.e. six enzymes present in the concentrate, which can be represented as follows:
\begin{align*}
    \dot{\mrx}_1 &= \frac{V_1}{K_m + 1} - \delta \mrx_1 ~;~ \dot{\mrx}_2 = \frac{V_2 \mrx_1}{K_m + \mrx_1} - \delta \mrx_2 \\
    \dot{\mrx_3} &= \frac{V_3 \mrx_2}{K_m + \mrx_2} - \delta \mrx_3 ~;~ \dot{\mrx}_4 = \frac{V_4 \mrx_3}{K_m + \mrx_3} - \delta \mrx_4 \\
    \dot{\mrx}_5 &= \frac{V_5 \mrx_4}{K_m+ \mrx_4} - \delta \mrx_5 ~;~ \dot{\mrx}_6 = \frac{V_6 \mrx_5}{K_m + \mrx_5} - \delta \mrx_6
\end{align*}
where, $\{\mrx_1, \mrx_2, \mrx_3, \mrx_4, \mrx_5, \mrx_6\} \in [0.1, 0.5]$ are the state variables and $\{V_1, V_2, V_3, V_4, V_5, V_6\} \in [0.5, 1.0]$ are the maximum reaction velocities for each enzyme present in the substrate. We set the Michaelis constant $K_m = 0.5$ and the degradation rate constant $\delta = 0.1$. The initial conditions and parameters of the model are uniform-randomly sampled from the range. The second derivatives w.r.t. time $t$ and the Lie derivatives w.r.t the parameter of the model $V_i$ are symbolically created.

To this end, we run all three models against the above two systems. The hyperparamters of the models are kept the same as other experiments reported, however, the learning rate is reduced to $0.005$ (high dimensional systems have sharper gradients and many local minima) and the time duration $T$ is reduced to 2 seconds with same intervals (to accomodate hardware limitations). We notice similar performance to previous methods where TM-PINNs perform well on shorter time horizons compared to other methods. Tab.~\ref{tab:results3} gives us the MAE across each system and Fig.~\ref{fig:cdo-model-performance},~\ref{fig:mmek-model-performance} show the prediction performance of the models.

\begin{table}[ht]
\centering
\small
\begin{tabular}{lccc}
\hline
Method & MMEK (6,6) & CDO (8,2) & Time (sec) \\ \hline
PINN & 0.020 & 0.061 & 1 \\
HO-PINN & 0.157 & 0.188 & 1 \\
TM-PINN & \textbf{0.005} & \textbf{0.005} & 1 \\ \hline
PINN & \textbf{0.030} & \textbf{0.112} & 2 \\
HO-PINN & 0.171 & 0.220 & 2 \\
TM-PINN & 0.097 & 0.130 & 2 \\ \hline
\end{tabular}
\caption{Results showing MAE($\downarrow$) on TM-PINN compared against vanilla PINN and HO-PINN on two larger dynamical system models across varying prediction time. \{MMEK=Michaelis-Menton Enzyme Kinematics, CDO=Coupled Damped Oscillators\}}
\label{tab:results3}
\end{table}

\begin{figure}[ht!]
    \centering
    \includegraphics[trim=0 15 10 10,clip,width=\linewidth]{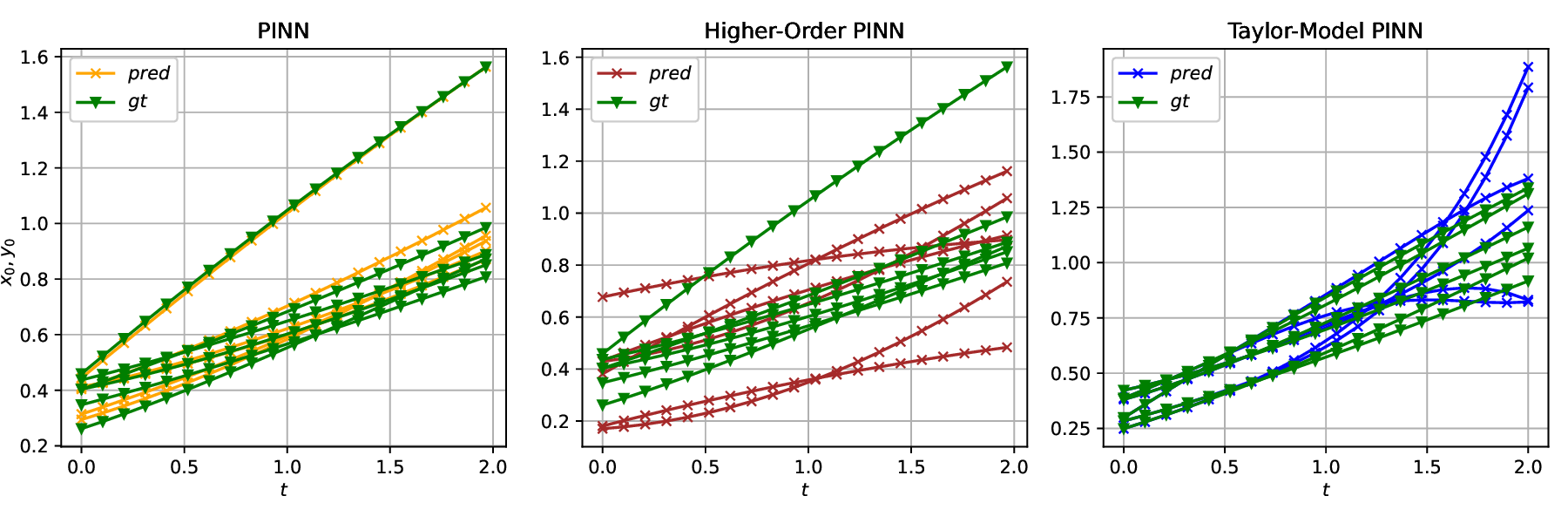}
    \caption{Prediction performance of the three models on \textbf{Michaelis-Menton Enzyme Kinematics} system. The initial conditions of the dynamical system are set to $\vx_0=[0.29, 0.38, 0.38, 0.28, 0.24, 0.42]$ and $\vtheta_0=[0.9, 0.76, 0.93, 0.62, 0.85, 0.74]$}
    \label{fig:mmek-model-performance}
\end{figure}

\begin{figure}[ht!]
    \centering
    \includegraphics[trim=0 15 10 10,clip,width=\linewidth]{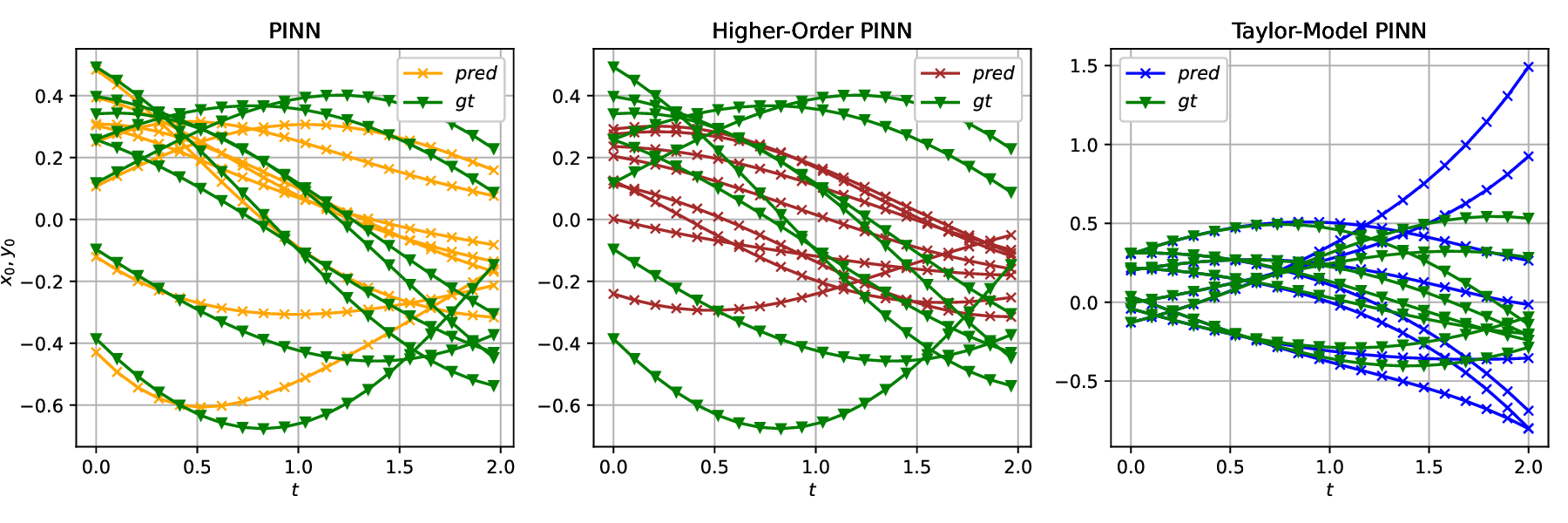}
    \caption{Prediction performance of the three models on \textbf{Coupled Damped Oscillator} system. The initial conditions of the dynamical system are set to $\vx_0=[0.00, 0.20, 0.21, -0.04, -0.13, 0.3, 0.31, 0.03]$ and $\vtheta_0=[0.44, 0.19]$}
    \label{fig:cdo-model-performance}
\end{figure}

\clearpage
\section{Algorithm for Taylor-Model PINNs} \label{appendix:A3}
In this section, we provide the algorithm for training TM-PINNs as used in our experiments. We start by symbolically computing the $m$ Lie derivatives of the dynamical system $f$ provided, and use a neural network $\psi$ to numerically approximate the remainder term. The loss functions, defined in \ref{par:tmpinnloss} is implemented as shown in Alg.~\ref{alg:tm-pinn} pseudocode.

\begin{algorithm}[ht!]
\caption{Taylor Model Physics-Informed Neural Networks (TM-PINNs)}
\label{alg:tm-pinn}
\begin{algorithmic}[1]
\STATE \textbf{Require:} Training data $\mathcal{D} = (\vx_i^0, \vtheta_i^0, \vt)_{i=1}^N$ where $\vx_0$ and $\vtheta_0$ are the initial conditions and parameters, $\vt$ represents the time horizon up to time $T$ divided by $\Delta t$, number of epochs $N_{\text{iter}}$, the dynamical system $f(\vx_0, \vtheta_0, t)$, and a neural network $\psi$ with weights and biases $(\vec{w}, \vec{b})$.
\STATE \textbf{Initialize:} Symbolically compute the $m$ Lie derivatives $\Lie^{(m)}$ of the system $f$
\STATE \textbf{Training:}
\FOR{$i=1$ to $N_{\text{iter}}$}
\STATE Sample a random mini-batch of training data $\vb \subset \mathcal{D}$
\STATE $\vec{g}_r \leftarrow \vb + t \Lie^{(1)}(\vb) + \dots + \frac{t^m}{m!} \Lie^{(m)}(\vb)$
\STATE $\vec{g}_l \leftarrow \Lie^{(1)}(\vb) + t \Lie^{(2)}(\vb) + \dots + \frac{t^{(m-1)}}{(m-1)!} \Lie^{(m)}(\vb)$
\STATE $\hat{\vb}, \dot{\vb} \leftarrow \psi(\vb), \nabla \psi(\vb)$
\STATE $\vec{g}_r \leftarrow f(\vec{g}_r + \frac{t^{(m+1)}}{(m+1)!} \hat{\vb}, \vtheta_0, t)$
\STATE $\vec{g}_l \leftarrow \vec{g}_l +\frac{t^m}{m!} \hat{\vb} + \frac{t^{(m+1)}}{(m+1)!} \dot{\vb}$
\STATE $\nabla_{\vec{w}, \vec{b}}\mathcal{L}(\vec{w}, \vec{b}) \leftarrow \nabla_{\vec{w}, \vec{b}} (|| \vec{g}_l - \vec{g}_r ||^2_2) + \nabla_{\vec{w}, \vec{b}} (|| \Lie^{(m+1)}(\vb) - \hat{\vb} ||^2_2)$ 
\STATE Update weights of $\psi$ using ADAM
\ENDFOR
\end{algorithmic}
\end{algorithm}

Here, we assume the same gradient update steps as used by the Adaptive Moment Estimation (ADAM) optimizer~\cite{kingma2014adam}, with some learning rate $\eta$. All the neural network weights are initalized using the Glorot initialization~\cite{glorot2010understanding} i.e. sampled from a uniform random distribution of mean $0$ and variance $\sqrt{\frac{2}{\xi_{in} + \xi_{out}}}$, where $\xi_{in}$ in the number of input layers and $\xi_{out}$ is the number of output layers.

\end{appendices}

\end{document}